%% file: main.tex
\documentclass{article}
\usepackage[T1]{fontenc}
\usepackage{iclr2027_conference,times}
\usepackage{amsmath,amssymb,amsthm,mathtools}
\usepackage{booktabs,multirow,array}
\usepackage{longtable}
\usepackage{float}
\usepackage{placeins}
\usepackage{graphicx}
\usepackage{xcolor}
\usepackage[utf8]{inputenc}
\usepackage[hidelinks]{hyperref}
\usepackage{url}
\usepackage{algorithm}
\usepackage{algpseudocode}
\usepackage{enumitem}
\hypersetup{pdftitle={The Price of Token Boundaries: Compression Certificates and Prediction},
  pdfauthor={Yuhao DU, Shunian Chen}}

\newtheorem{theorem}{Theorem}
\newtheorem{proposition}[theorem]{Proposition}
\newtheorem{corollary}[theorem]{Corollary}
\newtheorem{lemma}[theorem]{Lemma}
\theoremstyle{definition}

\newtheorem{remark}[theorem]{Remark}

\newcommand{\OPT}{\mathrm{OPT}}
\newcommand{\SP}{\mathrm{SP}}

\newcommand{\slot}{\textsc{SlotPrice}}

\title{The Price of Token Boundaries:\\
Compression Certificates and Prediction}

\author{\parbox[t]{\dimexpr\textwidth-2\tabcolsep\relax}{Yuhao DU$^{1,2}$, Shunian Chen$^{1}$\\\normalfont $^{1}$The Chinese University of Hong Kong, Shenzhen\\\normalfont $^{2}$Shenzhen Loop Area Institute\\\normalfont\texttt{yuhaodu1@link.cuhk.edu.cn}}}

\input{rebuild_numbers}
\input{validation_numbers}
\input{supplementary_numbers}

\input{replication_numbers}

\input{boundary_numbers}

\iclrfinalcopy
\begin{document}
\maketitle
\lhead{Preprint}

\begin{abstract}
Pre-tokenisation restricts which text fragments can become prediction units, but its
compression cost is obscured when tokenisers are compared only under the same boundaries.
We measure this cost by bounding the minimum token count from both sides, with and without a
regular-expression boundary rule. Non-negative prices on token occurrences yield a lower
bound through shortest paths and vocabulary-budget selection; maximising over all prices
recovers the linear-programming relaxation, and an independent integer checker certifies
the reported values. On English Wikipedia, boundaries increase the optimal token count by
$\dattaxEnLo$--$\dattaxEnHi\%$. Byte pair encoding lies $\datbpeGapEnRegex\%$ above the
constrained lower bound, but $\datbpeGapEnNone\%$ above the unrestricted bound.
Compression and prediction favour different dictionaries: at $85$M non-embedding parameters
and matched training-token budgets, unrestricted fitting yields higher mean held-out bits
per byte under a common unrestricted decoder in all $\datTestN$ languages in the paired study
and $\datRepTokenPositiveN$ of $\datRepLangN$ under independent tuning and evaluation.
To study intermediate boundary policies, we introduce
\emph{boundary licences}, which limit the vocabulary entries permitted to cross cuts and
admit the same form of certificate. On separate English and Chinese fitting corpora,
licensing $10\%$ of the vocabulary budget recovers $\boundaryEnTenRecovery\%$ and
$\boundaryZhTenRecovery\%$ of the achieved token-count reduction from removing all cuts.
These results quantify the compression cost of boundaries while separating it from the
prediction quality of the resulting token units.
\end{abstract}

\section{Introduction}

Tokenisers split text into the pieces a language model (LM) predicts over. A regular expression (regex) first divides text into
pieces; vocabulary training then selects recurring substrings within those pieces.
Preventing a token from spanning a word boundary can increase the number of prediction
steps. It may also preserve useful prediction units. These two effects are usually assessed
separately: compression methods seek shorter sequences, while language-model experiments
compare the quality of the resulting token streams. We ask what the boundary constraint
costs at the compression optimum, and what is gained by keeping it during vocabulary training
(Figure~\ref{fig:overview}).

\begin{figure}[t]
\centering
\includegraphics[width=\textwidth]{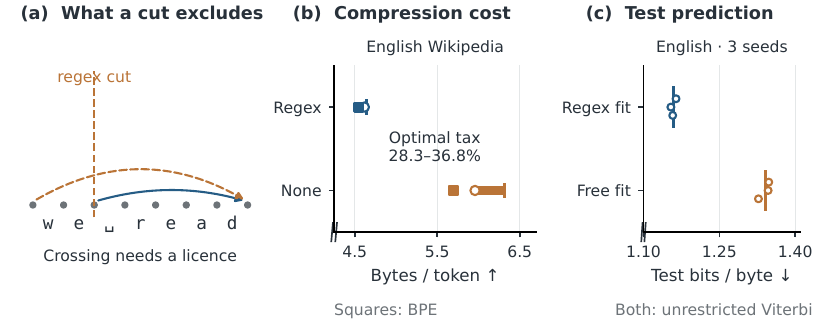}
\caption{\textbf{A compression advantage need not improve prediction.}
(a) A cut removes crossing candidate occurrences; a licence lets a token cross it.
(b) English compression optima, enclosed by feasible vocabularies and integer-checked
lower bounds ($K{=}32768$, $L{=}16$, Wikipedia); the value axis is broken. Squares mark
BPE's achieved compression ($\datbpeEnRegex$ and $\datbpeEnNone$ bytes per token);
the annotated tax interval is $\dattaxEnLo$--$\dattaxEnHi\%$.
(c) English test bit rates from three paired LM seeds; the value axis is broken.
Both arms use unrestricted Viterbi encoding, with dictionaries fitted with or without
regex boundaries. The unrestricted fitted dictionary has higher bit rate in each seed.
In panels (b) and (c), blue marks the regex-constrained arm and gold the unrestricted arm.
Compression certificates and test losses refer to their respective evaluation corpora.}
\label{fig:overview}
\end{figure}

A useful puzzle motivates the first question. ConvexTok and
JOLT~\citep{convextok,jolt} report that byte pair encoding (BPE) is close to a compression lower bound under
pre-tokenisation (minimum-count for ConvexTok, greedy longest-match for JOLT). SuperBPE and BoundlessBPE~\citep{superbpe,boundlessbpe} nevertheless
obtain substantially shorter sequences by relaxing token boundaries. The observations
concern different optima. A certificate inside the regex-constrained feasible set cannot
bound the cost of the constraint itself. Measuring that cost requires a certificate that
also covers unrestricted documents, where the number of candidate occurrences grows as
corpus bytes times maximum token length.

We obtain such a certificate by assigning a non-negative price to each candidate occurrence.
The lower bound is a shortest-path cost minus the largest $K$ aggregate token bids,
where $K$ is the vocabulary budget.
This specialises classical Lagrangian relaxation~\citep{geoffrion1974,fisher1981} to a
computation that needs no general-purpose linear-programming (LP) solver. Prices may be optimised approximately;
validity is checked separately in integer arithmetic. A feasible vocabulary supplies the
other end of the interval, so the cost of pre-tokenisation can be measured independently
of whether BPE or our vocabulary search reaches the optimum.

In a paired $\datTestN$-language study,
unrestricted fitted dictionaries compress better but have higher mean test bits per byte
in every language. A replication with independent tuning and evaluation seeds retains
this direction in $\datRepTokenPositiveN$ of $\datRepLangN$ languages. This extends the evidence that minimum token
count is an incomplete objective for language modelling~\citep{pathpiece,tokeval2026}.
Appendix~\ref{app:related} gives an extended discussion of related work and scope.

\begin{samepage}
Vocabulary content, token frequencies and effective byte context change together in this
comparison. Boundary licences control a complementary quantity: the number of vocabulary
entries permitted across regex cuts. Their endpoints reproduce the constrained and unrestricted optimisation problems, and their
certificate measures the compression available at intermediate budgets.\par
\end{samepage}

\paragraph{Contributions.}
We \textbf{certify the cost} of boundaries with an exact numerical checker and two-sided
optimal-tax intervals (\S\ref{sec:theory}--\ref{sec:tax});
\textbf{measure prediction differences} using held-out test comparisons with an explicit
vocabulary and decoder protocol (\S\ref{sec:lm}); and
\textbf{control boundary permissions} with a boundary-licence formulation and measured compression
frontiers (\S\ref{sec:frontier}).

\section{Related work}
\label{sec:related}

\paragraph{Vocabulary optimisation.}
BPE and Unigram optimise merge-frequency and likelihood objectives,
respectively~\citep{gage1994,sennrich2016,kudo2018,sentencepiece}.
BytePiece~\citep{bytepiece2023} trains byte-level Unigram vocabularies with a byte n-gram
model and decodes by maximising summed token log probabilities.
ConvexTok~\citep{convextok} formulates vocabulary selection and minimum-count segmentation
as a linear programme; JOLT~\citep{jolt} incorporates greedy-inference consistency.
We share ConvexTok's vocabulary-selection relaxation. Our contribution is a price
certificate that evaluates it without a general-purpose solver on unrestricted document
graphs. Its optimisation and numerical verification are separate computations.
The decomposition separates shortest-path segmentation from vocabulary-budget selection.

\paragraph{Boundary relaxation and prediction.}
SuperBPE and BoundlessBPE construct vocabularies that span conventional
boundaries~\citep{superbpe,boundlessbpe}. Our tax interval instead bounds the compression
cost of the boundary rule independently of a vocabulary-construction heuristic.
PathPiece~\citep{pathpiece} and TokEval~\citep{tokeval2026} show why intrinsic compression
must be assessed alongside language-model quality. We connect these questions by
certifying compression optima, comparing prediction under explicit decoder protocols,
and formulating a budget on crossing permissions.

\section{The occurrence-price certificate}
\label{sec:problem}\label{sec:theory}

\subsection{Vocabulary selection and boundary constraints}
Let $\Sigma$ be the alphabet of $256$ bytes and let a corpus contain $B>0$ bytes in documents $\mathcal D=\{b^{(n)}\}_{n=1}^{N_{\mathrm{doc}}}$.
The candidate set $\mathcal C$ consists of all distinct corpus substrings of lengths
$2$ through $L$, where $L\ge2$. A vocabulary contains all $256$ bytes and at most
$K\ge1$ additional entries. With $\mathrm{seg}(b;V)$ denoting the minimum number of
tokens from $V$ whose concatenation is $b$, define
\begin{equation}
\OPT(K)=\min_{S\subseteq\mathcal C,\,|S|\le K}
\sum_n\mathrm{seg}(b^{(n)};\Sigma\cup S).
\label{eq:opt}
\end{equation}
Every document defines a candidate directed acyclic graph $G_n$ on byte positions,
containing byte edges and all candidate occurrences legal under the chosen cut regime.
A vocabulary $S$ makes usable the byte edges and longer edges whose strings belong to $S$.
Minimum-count encoding is a shortest path in this usable subgraph. Other inference rules can produce more tokens,
so a lower bound on \eqref{eq:opt} also bounds their achievable token counts.

A pre-tokeniser $\pi$ deletes occurrences that cross its cuts, preserving byte edges
and document boundaries. In a cut regime, occurrence sets contain only surviving edges;
candidate types with no surviving occurrence are omitted. Hence $\OPT_\pi(K)\ge\OPT_\varnothing(K)$.
We call
\begin{equation}
\tau_\pi(K)=\frac{\OPT_\pi(K)}{\OPT_\varnothing(K)}-1
\label{eq:tax}
\end{equation}
the \emph{optimal pre-tokenisation tax}. This compares two optima; the ratio of two
trained vocabularies is a different measurement of achieved compression.

\subsection{Prices and the certified bound}
The difficulty is that selecting one vocabulary entry enables all of its occurrences,
possibly in many documents. Prices temporarily separate these decisions: each path pays
for the occurrences it uses, while vocabulary selection collects their aggregate bids.
Subtracting the largest admissible collection of bids makes the separation conservative.

Write $x_e$ for edge use and $y_t$ for vocabulary inclusion. For edge $e$, let $t(e)$ denote its byte string and $E(t)$ the occurrences of string $t$.
With $A_n$ the document incidence matrix and $d_n$ a unit source-to-sink demand,
$A_nx^{(n)}=d_n$ enforces a unit flow. The equivalent integer programme is
\begin{equation}
\min_{x,y\in\{0,1\}}\sum_e x_e
\quad\text{s.t.}\quad
A_nx^{(n)}=d_n,\qquad x_e\le y_{t(e)},\qquad\sum_ty_t\le K.
\label{eq:ip}
\end{equation}
Vocabulary variables index multibyte strings only, and the linking constraint applies only to
multibyte edges; byte edges are always available. Relax it using prices
$\lambda_e\ge0$, and define the bid for token $t$ by
$\Lambda_t=\sum_{e\in E(t)}\lambda_e$. Write $\SP_n(1+\lambda)$ for the shortest-path cost in the complete candidate graph $G_n$ with cost
$1+\lambda_e$ on multibyte edges and cost $1$ on byte edges.

\begin{theorem}[Price certificate]
\label{thm:cert}
For any non-negative occurrence prices,
\begin{equation}
\boxed{\quad\OPT(K)\ \ge\ C_K(\lambda)
:=\sum_n\SP_n(1+\lambda)-\sum_{i=1}^{K}\Lambda_{(i)}.\quad}
\label{eq:cert}
\end{equation}
The bids are sorted in decreasing order and padded with zeros as necessary.
\end{theorem}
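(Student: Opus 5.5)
The plan is a weak-duality (Lagrangian relaxation) argument, carried out directly on an optimal integer solution so that no LP theory is needed. Fix non-negative prices $\lambda$ and let $(x^\star,y^\star)$ be optimal for \eqref{eq:ip}. Its value is $\OPT(K)$: an optimal vocabulary $S^\star$ together with minimum-count segmentations is feasible for \eqref{eq:ip}, and conversely every feasible $(x,y)$ encodes each document with tokens from $\Sigma\cup\{t:y_t=1\}$. I will show that the right-hand side of \eqref{eq:cert} is at most the objective of this particular solution.

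\textbf{Step 1: add a non-negative slack.} Since $\lambda_e\ge0$ and $y^\star_{t(e)}-x^\star_e\ge0$ on every multibyte edge, we have
\[
\sum_e x^\star_e\ \ge\ \sum_e x^\star_e+\sum_{e\ \mathrm{multibyte}}\lambda_e\bigl(x^\star_e-y^\star_{t(e)}\bigr).
\]
Regroup the right side into two parts:
\[
\sum_n\Bigl(\sum_{e\in G_n}(1+\lambda_e)x^\star_e\Bigr)\;-\;\sum_t y^\star_t\Lambda_t,
\]
with $\lambda_e$ read as $0$ on byte edges. This uses $\sum_{e\in E(t)}\lambda_e y^\star_t=y^\star_t\Lambda_t$, which depends only on each edge having exactly one string $t(e)$.

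\textbf{Step 2: bound each part.} For each document, $x^{\star(n)}$ is the indicator of a source-to-sink path in $G_n$. It satisfies $A_nx=d_n$ with binary entries, and on a DAG such a flow is exactly a path. Therefore its cost under weights $1+\lambda$ is at least $\SP_n(1+\lambda)$. The path may use edges of any string, and $G_n$ is the complete candidate graph for the regime, so restricting to usable edges only raises the cost.

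For the subtracted part, $y^\star$ is a 0/1 vector with at most $K$ ones and all $\Lambda_t\ge0$. Hence $\sum_t y^\star_t\Lambda_t$ is at most the sum of the $K$ largest bids, $\sum_{i\le K}\Lambda_{(i)}$. Zero padding covers the case where there are fewer than $K$ candidate types. Combining the two bounds gives $\OPT(K)\ge C_K(\lambda)$.

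\textbf{Main obstacle.} I expect the only delicate point to be the bookkeeping in Step 1 rather than any inequality.
\begin{itemize}[leftmargin=*]
\item Byte edges must carry no price and no linking constraint.
\item Under a cut regime, $E(t)$ must contain only surviving occurrences, and omitted types must contribute bid $0$.
\item $\SP_n$ must be computed over the same regime's graph.
\end{itemize}
I will state explicitly that the theorem is applied per regime: $G_n$, $E(t)$ and $\mathcal C$ are all taken from that regime. This yields the bound for $\OPT_\pi$ and $\OPT_\varnothing$ alike.
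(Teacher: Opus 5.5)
Your proof is correct. It is the Lagrangian weak-duality bound applied directly to an integer solution, and it coincides with the short direct argument the paper gives right after the theorem statement.

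The paper's formal proof in the appendix takes a different path. It fixes $\lambda$ and builds an explicit feasible point of the LP dual (D):
\begin{itemize}
\item the potentials $\phi^n_i$ are the priced shortest-path distances;
\item the budget multiplier is $\theta=\Lambda_{(K)}$;
\item the upper-bound multipliers are $w_t=\max(0,\Lambda_t-\theta)$.
\end{itemize}
It then checks the identity $K\Lambda_{(K)}+\sum_t\max(0,\Lambda_t-\Lambda_{(K)})=\sum_{i\le K}\Lambda_{(i)}$ and chains $\mathrm{val}(\mathrm{D})\le\mathrm{val}(\mathrm{LP})\le\mathrm{val}(\mathrm{IP})=\OPT(K)$.

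Your route is more elementary. You never introduce $\theta$ or $w$, and the top-$K$ sum appears simply as the largest value of $\sum_t y_t\Lambda_t$ over $0/1$ vectors with at most $K$ ones. The paper's route buys two things that later results reuse: the intermediate inequality $C_K(\lambda)\le\mathrm{val}(\mathrm{LP})$, and the identification of $C_K$ as the dual objective after eliminating $\phi$ and $w$. The LP-tightness theorem and the LP comparison in the exact numerical certificate both depend on these.

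As written, your Step~2 uses integrality in two places: a $0/1$ unit flow on a DAG is a single path, and $y^\star$ is $0/1$. So it yields only the bound on $\OPT(K)$. It would give the LP bound too with two changes. First, decompose a fractional unit flow on the DAG into paths, each costing at least $\SP_n(1+\lambda)$. Second, use the fractional bound $\sum_t y_t\Lambda_t\le\sum_{i\le K}\Lambda_{(i)}$ for $y\in[0,1]^{\mathcal C}$ with $\sum_t y_t\le K$.
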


To see the bound directly, take any feasible vocabulary and segmentation.
The priced shortest path costs no more than this segmentation's token count plus the
prices of its used occurrences. Those prices sum to at most the bids of the vocabulary's
$K$ entries, and therefore to at most the top-$K$ bid sum. Subtraction gives
\eqref{eq:cert}. The proof does not require optimised prices.

\begin{theorem}[LP tightness of the price family]
\label{thm:nogap}
The maximum of $C_K(\lambda)$ over all $\lambda\ge0$ equals the optimum of
\eqref{eq:ip} relaxed to $x\ge0$, $y\in[0,1]$.
\end{theorem}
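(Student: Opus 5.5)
This is a Lagrangian-duality statement, and I would prove it through LP duality. The plan has three steps: identify $C_K(\lambda)$ with the Lagrangian dual function of the LP relaxation in which only the linking constraints $x_e\le y_{t(e)}$ are dualised; show that this dual function equals $C_K(\lambda)$ exactly; and conclude by strong duality for linear programmes that its maximum over $\lambda\ge0$ equals the LP optimum.

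\textbf{Step 1 (Lagrangian).} For $\lambda\ge0$ on multibyte edges, define
\[
L(\lambda)=\min_{x,y}\ \sum_e x_e+\sum_{e\ \mathrm{multibyte}}\lambda_e\,(x_e-y_{t(e)}),
\]
with $(x,y)$ ranging over the polytope $P=\{x\ge0,\ A_nx^{(n)}=d_n\ \forall n,\ y\in[0,1],\ \sum_t y_t\le K\}$. The objective separates into an $x$-part and a $y$-part.

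The $x$-part decomposes by document. Each document contributes $\min\{(1+\lambda)^\top x^{(n)} : A_n x^{(n)}=d_n,\ x^{(n)}\ge0\}$, where the cost is $1+\lambda_e$ on multibyte edges and $1$ on byte edges. This is a unit-flow LP on a DAG with non-negative costs. Because the incidence matrix is totally unimodular, an optimal vertex is an integral source-to-sink path, so the value is $\SP_n(1+\lambda)$.

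The $y$-part equals $-\max\{\sum_t\Lambda_t y_t : y\in[0,1],\ \sum_t y_t\le K\}$. This is a fractional knapsack with unit weights and non-negative profits. Its optimum is to set $y_t=1$ on the $K$ largest bids, which gives $\sum_{i\le K}\Lambda_{(i)}$; the zero padding handles the case of fewer than $K$ types. Combining the two parts gives $L(\lambda)=C_K(\lambda)$.

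\textbf{Step 2 (strong duality).} Let $z^\ast$ denote the optimum of the relaxation. It is feasible, since $x$ can be any byte path and $y=0$, and it is bounded below by $0$. Weak duality, which Theorem~\ref{thm:cert} already gives in this relaxed setting, yields $C_K(\lambda)\le z^\ast$. For the reverse direction, write the LP with the constraint $P$ kept and the linking rows $x_e-y_{t(e)}\le0$ as the only coupling constraints. LP duality for a linear objective over a polyhedron with additional linear inequalities gives
\[
\min\{c^\top w: w\in P,\ Mw\le0\}=\max_{\lambda\ge0}\min_{w\in P}\bigl(c^\top w+\lambda^\top Mw\bigr).
\]
This follows by writing the full LP dual and grouping the multipliers of $P$'s rows into the inner minimisation. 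Equivalently, it is the standard fact that the Lagrangian dual of an LP has no gap: the polytope $P$ is its own convex hull. The maximum is attained, because the full LP dual is feasible and bounded and therefore has an optimal solution, whose $\lambda$-component achieves $z^\ast$.

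\textbf{Main obstacle.} The step that needs the most care is Step 1's claim that the inner minimum over the relaxed region $P$, with $x\ge0$ continuous rather than a path indicator, equals the shortest-path value. Total unimodularity of $A_n$ handles this. The second point to check is that dualising only the multibyte linking constraints matches the relaxation described in the statement, in which byte edges are unconstrained. Beyond those two points, the remaining argument is the standard Geoffrion no-gap argument: the Lagrangian subproblem over $P$ has the integrality property, so the Lagrangian bound equals the LP bound.
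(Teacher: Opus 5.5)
Your proof is correct and follows essentially the paper's route. Both arguments dualise only the linking constraints $x_e\le y_{t(e)}$, identify the resulting dual function with $C_K(\lambda)$, and invoke LP strong duality with attainment; you evaluate the inner problem on the primal side (total unimodularity for the unit flow, a fractional knapsack for the top-$K$ term), whereas the paper eliminates the potentials, $w$ and $\theta$ from its explicit dual (D). One small fix: Theorem~\ref{thm:cert} as stated only bounds $\OPT(K)$, so obtain $C_K(\lambda)\le z^\ast$ directly from your $L(\lambda)$: for any LP-feasible $w$ one has $c^\top w\ge c^\top w+\lambda^\top Mw\ge L(\lambda)$.
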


Thus the full price family loses nothing relative to the LP. A finite optimiser can
stop short of that maximum, and the LP can lie below the integer optimum.
Appendix~\ref{app:price-duality} gives the explicit LP dual and sufficient conditions for
integer equality.

\paragraph{Scalable price optimisation.}
\label{sec:algo}
Setting $\lambda_e=\theta/n_{t(e)}$, where $n_t=|E(t)|$, gives an inexpensive initialisation:
\begin{corollary}[Uniform bids]
\label{cor:unif}
For $\theta\ge0$, $\OPT(K)\ge\sum_n\SP_n(1+\theta/n_t)-K\theta$.
\end{corollary}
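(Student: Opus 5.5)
The plan is to obtain Corollary~\ref{cor:unif} as a direct specialisation of Theorem~\ref{thm:cert}: choose a particular non-negative price vector, then evaluate the top-$K$ bid sum in closed form. Fix $\theta\ge0$ and set $\lambda_e=\theta/n_{t(e)}$ for every multibyte occurrence $e$. Here $n_t=|E(t)|\ge1$ for every candidate type present in the occurrence sets, since types with no surviving occurrence are omitted. These prices are well defined and non-negative, so Theorem~\ref{thm:cert} applies and gives $\OPT(K)\ge\sum_n\SP_n(1+\lambda)-\sum_{i=1}^K\Lambda_{(i)}$. The shortest-path term is exactly $\sum_n\SP_n(1+\theta/n_t)$ in the notation of the corollary, because each multibyte edge $e$ has cost $1+\theta/n_{t(e)}$ and each byte edge has cost $1$.

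The remaining step is to compute the bids. For each token $t$,
\[
\Lambda_t=\sum_{e\in E(t)}\frac{\theta}{n_t}=n_t\cdot\frac{\theta}{n_t}=\theta .
\]
So every candidate has the same bid $\theta$. The top-$K$ sum is then $\theta\min(K,|\mathcal C|)$, where $\mathcal C$ is understood as the retained types, because the list is padded with zeros when fewer than $K$ types exist. Since $\theta\ge0$, this sum is at most $K\theta$. Subtracting the larger quantity $K\theta$ can only weaken the bound, so $\OPT(K)\ge\sum_n\SP_n(1+\theta/n_t)-K\theta$.

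There is no substantial obstacle. The only points needing care are that $n_t\neq0$ for every priced type, so the division is legal, and the zero-padding case $|\mathcal C|<K$. The latter is handled by the inequality $\theta\min(K,|\mathcal C|)\le K\theta$ rather than by equality. The same argument applies verbatim in any cut regime $\pi$, because Theorem~\ref{thm:cert} is stated over whichever occurrence sets survive the cuts.
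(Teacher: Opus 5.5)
Your proposal is correct and follows essentially the same route as the paper: specialise Theorem~\ref{thm:cert} to $\lambda_e=\theta/n_{t(e)}$, observe that every bid equals $\theta$, and bound the top-$K$ sum by $K\theta$, with equality when $K\le|\mathcal C|$ and the zero-padded case giving a weaker but still valid bound. The paper's proof additionally remarks that the bound is concave and piecewise linear in $\theta$ and interprets its derivative, but those remarks are not needed for the inequality itself.
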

We optimise individual occurrence prices for a selected active set and assign every
remaining candidate a bid no larger than the $K$-th largest bid in the active set. Those candidates remain
in the shortest-path graph. This restricts the price family while retaining a bound for
the full vocabulary class (Proposition~\ref{prop:bg}). A suffix-array index supports
$O(BL)$ evaluation. Our system, \slot{} (Algorithm~\ref{alg:main}), combines projected
price ascent with vocabulary additions,
swaps and ruin-and-recreate proposals; complete vocabularies are rescored before acceptance.
Only accepted improvements change the upper bound (Appendix~\ref{app:search});
Appendix~\ref{app:ablations} separates the effects of the optimisation components.

\subsection{From a formula to conservative reported numbers}
\label{sec:numeric}
An independent checker converts saved prices into non-negative multiples of $2^{-32}$,
recomputes token bids exactly, and evaluates every shortest path with checked integer
arithmetic. Background prices are integer quotients whose total bid is no larger than
the active threshold. The resulting rational number is itself a valid dual value;
its validity does not depend on a floating-point error estimate or optimiser convergence.
The checker validates candidate identities and occurrence counts, rejects overflow, and
records input hashes (Appendices~\ref{app:price-verification} and~\ref{app:validation}).

Combining a positive integer lower bound $L_j$ with a feasible token count $U_j$ in each regime
$j\in\{\pi,\varnothing\}$ yields
\begin{equation}
\max\!\left\{0,\frac{L_\pi}{U_\varnothing}-1\right\}
\ \le\ \tau_\pi(K)\ \le\ \frac{U_\pi}{L_\varnothing}-1.
\label{eq:taxinterval}
\end{equation}
Reported endpoints are rounded outwards. These are deterministic optimisation intervals,
not sampling confidence intervals (Appendix~\ref{app:tax-budgets}).

\section{The certified cost of pre-tokenisation}
\label{sec:tax}
\input{results}

\section{Prediction at matched training budgets}
\label{sec:lm}
\input{lm}

\section{Boundary licences}
\label{sec:frontier}
\input{boundary_frontier}

\section{Discussion}
The certificate and the language-model comparisons answer complementary questions.
The first establishes how much compression a boundary rule excludes at fixed corpus,
length cap and vocabulary budget. The second compares prediction from the resulting
fitted dictionaries. A small optimisation gap inside a constrained problem cannot settle
either the cost of that constraint or its value for prediction. This distinction explains
why near-optimal constrained BPE and substantial gains from boundary relaxation can
coexist.

Boundary licences expose a further design choice: permissions can be allocated to selected
string types while legality remains occurrence-specific. The measured frontier places
much of the achieved compression gain at small licence budgets, making intermediate
policies concrete candidates for future language-model evaluation. Compression certificates
quantify the available sequence-length reduction; prediction measurements determine which reductions are useful.

\section{Conclusion}
Occurrence pricing certifies the compression cost of pre-tokenisation without requiring an
exact vocabulary search. The resulting intervals expose a substantial boundary cost that
constrained optimality comparisons conceal. The multilingual comparisons favour regex-constrained fitting in most languages,
including under independent tuning and evaluation, showing why
sequence length alone is insufficient to choose prediction units. Boundary licences extend
the certificate to intermediate policies and quantify how much compression they recover.

\section*{Limitations}
The certificates apply to specified finite corpora, vocabulary budgets and maximum token
lengths. Certification on other text domains requires their corresponding corpora.
The cut-placement control preserves character-length multisets while allowing byte-length variation.
Language-model results concern the measured architectures, training
budgets and decoder protocols, with three seeds per language limiting precision within each
language. Token matching fixes training-token allocation while byte exposure and context span
vary jointly; the measured contrast combines these changes with vocabulary content.
Vocabulary-search effort also differs between regimes; matching this effort would further
separate the fitting constraint from the quality of its optimisation. The largest
reported model has $1.88$B non-embedding parameters. Extending the prediction analysis to
larger models and to intermediate licence budgets remains future work.

\label{sec:mainend}
\section*{Reproducibility statement}
\label{sec:repro}
The supplementary package provides per-seed test measurements, compression endpoints and
a self-contained script that reconstructs their reported summary statistics.
Appendices~\ref{app:proofs},
\ref{app:setup}, \ref{app:lmfull} and~\ref{app:boundary-protocol} give proofs and protocols.
Appendix~\ref{app:checklist} gives the reconstruction commands and
Appendix~\ref{app:gpuplan} documents the independent tuning/evaluation replication.

\section*{AI statement}
Generative AI assisted code development, manuscript writing, mathematical and reference checks,
the authors reviewed and verified all AI-assisted work.
\bibliographystyle{iclr2027_conference}
\bibliography{refs}
\clearpage
\appendix
\raggedbottom

\section{Proofs and numerical certification}
\label{app:proofs}
\input{proofs}
\input{boundary_proofs}

\section{Compression protocol and complete results}
\label{app:setup}\label{app:extra}
\input{setup_full}
\input{compression_supplement}

\FloatBarrier
\input{lm_full}
\FloatBarrier
\input{boundary_supplement}

\FloatBarrier

\section{Related work and scope}
\label{app:related}
\input{related_full}

\section{Reproduction and independent replication}
\label{app:checklist}
\input{checklist}
\input{gpu_pending}

\end{document}

%% file: rebuild_numbers.tex
\providecommand{\dattaxEnLo}{}
\renewcommand{\dattaxEnLo}{28.3}
\providecommand{\dattaxEnHi}{}
\renewcommand{\dattaxEnHi}{36.8}

\providecommand{\datbpeEnRegex}{}
\renewcommand{\datbpeEnRegex}{4.551}
\providecommand{\datbpeGapEnRegex}{}
\renewcommand{\datbpeGapEnRegex}{2.1}

\providecommand{\datbpeEnNone}{}
\renewcommand{\datbpeEnNone}{5.697}
\providecommand{\datbpeGapEnNone}{}
\renewcommand{\datbpeGapEnNone}{10.9}

\providecommand{\datslotVsBpeEnNone}{}
\renewcommand{\datslotVsBpeEnNone}{4.39}
\providecommand{\dattaxZhLo}{}
\renewcommand{\dattaxZhLo}{9.8}
\providecommand{\dattaxZhHi}{}
\renewcommand{\dattaxZhHi}{11.0}

\providecommand{\dattaxMin}{}
\renewcommand{\dattaxMin}{10.1}

\providecommand{\dattaxMax}{}
\renewcommand{\dattaxMax}{56.2}

\providecommand{\datTestN}{}
\renewcommand{\datTestN}{12}
\providecommand{\datTestMean}{}
\renewcommand{\datTestMean}{9.25}
\providecommand{\datTestLo}{}
\renewcommand{\datTestLo}{0.93}
\providecommand{\datTestHi}{}
\renewcommand{\datTestHi}{17.31}
\providecommand{\datTestCiLo}{}
\renewcommand{\datTestCiLo}{6.54}
\providecommand{\datTestCiHi}{}
\renewcommand{\datTestCiHi}{11.92}
\providecommand{\datTestHolmN}{}
\renewcommand{\datTestHolmN}{2}
\providecommand{\datTestSignP}{}
\renewcommand{\datTestSignP}{0.00024}
\providecommand{\datTestSd}{}
\renewcommand{\datTestSd}{4.96}
\providecommand{\datTestSeedN}{}
\renewcommand{\datTestSeedN}{3}
\providecommand{\datTestPairN}{}
\renewcommand{\datTestPairN}{36}
\providecommand{\dattaxKmaxEnLo}{}
\renewcommand{\dattaxKmaxEnLo}{53.6}
\providecommand{\dattaxKmaxEnHi}{}
\renewcommand{\dattaxKmaxEnHi}{61.3}
\providecommand{\dattaxKmaxEnAch}{}
\renewcommand{\dattaxKmaxEnAch}{54.5}
\providecommand{\dattaxKmaxZhLo}{}
\renewcommand{\dattaxKmaxZhLo}{14.4}
\providecommand{\dattaxKmaxZhHi}{}
\renewcommand{\dattaxKmaxZhHi}{15.7}
\providecommand{\dattaxKmaxZhAch}{}
\renewcommand{\dattaxKmaxZhAch}{14.7}
\providecommand{\datbpeGapKSFirst}{}
\renewcommand{\datbpeGapKSFirst}{9.8}
\providecommand{\datbpeGapKSLast}{}
\renewcommand{\datbpeGapKSLast}{1.3}
\providecommand{\dattaxRegexEnvLo}{}
\renewcommand{\dattaxRegexEnvLo}{25.3}
\providecommand{\dattaxRegexEnvHi}{}
\renewcommand{\dattaxRegexEnvHi}{37.0}
\providecommand{\dattaxPenaltyRho}{}
\renewcommand{\dattaxPenaltyRho}{+0.22}
\providecommand{\dattaxPenaltyRhoP}{}
\renewcommand{\dattaxPenaltyRhoP}{0.49}

%% file: validation_numbers.tex
\providecommand{\datvalOursLo}{}
\renewcommand{\datvalOursLo}{99.59}
\providecommand{\datvalOursHi}{}
\renewcommand{\datvalOursHi}{99.99}

\providecommand{\datnValid}{}
\renewcommand{\datnValid}{14}

%% file: supplementary_numbers.tex
\providecommand{\datbbBudgetSlot}{}
\renewcommand{\datbbBudgetSlot}{+0.9}
\providecommand{\datbbBudgetSlotpi}{}
\renewcommand{\datbbBudgetSlotpi}{-0.1}
\providecommand{\datbbDefBytes}{}
\renewcommand{\datbbDefBytes}{-4.5}
\providecommand{\datbbDefTok}{}
\renewcommand{\datbbDefTok}{-3.6}
\providecommand{\datbbSeedN}{}
\renewcommand{\datbbSeedN}{2}
\providecommand{\datcoCorpusGB}{}
\renewcommand{\datcoCorpusGB}{52}
\providecommand{\datcoFirst}{}
\renewcommand{\datcoFirst}{+18.32}
\providecommand{\datcoFirstN}{}
\renewcommand{\datcoFirstN}{24}
\providecommand{\datcoLast}{}
\renewcommand{\datcoLast}{+2.65}
\providecommand{\datcoLastN}{}
\renewcommand{\datcoLastN}{387}
\providecommand{\datdepLongBenefitMax}{}
\renewcommand{\datdepLongBenefitMax}{0.022}
\providecommand{\datdsABlimpSlotVsSlotpi}{}
\renewcommand{\datdsABlimpSlotVsSlotpi}{-4.0}
\providecommand{\datdsAttenDiff}{}
\renewcommand{\datdsAttenDiff}{-1.1}
\providecommand{\datdsAttenDiffSd}{}
\renewcommand{\datdsAttenDiffSd}{0.5}
\providecommand{\datdsBBlimpSlotVsSlotpi}{}
\renewcommand{\datdsBBlimpSlotVsSlotpi}{-2.9}
\providecommand{\datdsDoDBootHi}{}
\renewcommand{\datdsDoDBootHi}{-0.8}
\providecommand{\datdsDoDBootLo}{}
\renewcommand{\datdsDoDBootLo}{-1.4}
\providecommand{\datdsUidAMorph}{}
\renewcommand{\datdsUidAMorph}{-6.9}
\providecommand{\datdsUidBMorph}{}
\renewcommand{\datdsUidBMorph}{-3.1}
\providecommand{\datlmEnSdRatio}{}
\renewcommand{\datlmEnSdRatio}{43}
\providecommand{\datnLmRuns}{}
\renewcommand{\datnLmRuns}{1186}
\providecommand{\datpanelSeeds}{}
\renewcommand{\datpanelSeeds}{1234, 5678, 9012}
\providecommand{\datpanelSelEdgeList}{}
\renewcommand{\datpanelSelEdgeList}{English (seeds 5678 and 9012, both arms); Turkish (seed 5678, regex arm); Finnish (seed 9012, unrestricted arm); Korean (seed 9012, unrestricted arm); Russian (seeds 5678 and 9012, regex arm)}
\providecommand{\datpanelSelEdgeN}{}
\renewcommand{\datpanelSelEdgeN}{9}
\providecommand{\datpanelSelInteriorN}{}
\renewcommand{\datpanelSelInteriorN}{63}
\providecommand{\datpanelSelN}{}
\renewcommand{\datpanelSelN}{72}
\providecommand{\datxxlBpeBpb}{}
\renewcommand{\datxxlBpeBpb}{0.78366}
\providecommand{\datxxlBpeXtwoMean}{}
\renewcommand{\datxxlBpeXtwoMean}{1.59178}
\providecommand{\datxxlBpeXtwoN}{}
\renewcommand{\datxxlBpeXtwoN}{2}
\providecommand{\datxxlBpeXtwoSd}{}
\renewcommand{\datxxlBpeXtwoSd}{0.00596}
\providecommand{\datxxlCorpusGB}{}
\renewcommand{\datxxlCorpusGB}{176}
\providecommand{\datxxlSlotVsBpe}{}
\renewcommand{\datxxlSlotVsBpe}{4.58}
\providecommand{\datxxlSlotpiBpb}{}
\renewcommand{\datxxlSlotpiBpb}{0.78117}
\providecommand{\datxxlSlotpiXtwoMean}{}
\renewcommand{\datxxlSlotpiXtwoMean}{0.78162}
\providecommand{\datxxlSlotpiXtwoN}{}
\renewcommand{\datxxlSlotpiXtwoN}{3}
\providecommand{\datxxlSlotpiXtwoSd}{}
\renewcommand{\datxxlSlotpiXtwoSd}{0.00040}
\providecommand{\datxxlSuperOffVsBpe}{}
\renewcommand{\datxxlSuperOffVsBpe}{0.72}
\providecommand{\datxxlSuperVsBpe}{}
\renewcommand{\datxxlSuperVsBpe}{2.87}

\providecommand{\datyyAnchorBpe}{}
\renewcommand{\datyyAnchorBpe}{-3.8}
\providecommand{\datyyAnchorSlotpi}{}
\renewcommand{\datyyAnchorSlotpi}{-2.9}
\providecommand{\datyyBandBpe}{}
\renewcommand{\datyyBandBpe}{0.5}
\providecommand{\datyyBandSlotpi}{}
\renewcommand{\datyyBandSlotpi}{0.5}
\providecommand{\datyyDefBpe}{}
\renewcommand{\datyyDefBpe}{-3.3}
\providecommand{\datyyDefSlotpi}{}
\renewcommand{\datyyDefSlotpi}{-3.3}

\providecommand{\datyyNBpe}{}
\renewcommand{\datyyNBpe}{2}

%% file: replication_numbers.tex
\providecommand{\datRepLangN}{}
\renewcommand{\datRepLangN}{12}
\providecommand{\datRepSeedN}{}
\renewcommand{\datRepSeedN}{3}
\providecommand{\datRepTuneSeedN}{}
\renewcommand{\datRepTuneSeedN}{2}

\providecommand{\datRepTokenBudget}{}
\renewcommand{\datRepTokenBudget}{629{,}145{,}600}
\providecommand{\datRepTokenMean}{}
\renewcommand{\datRepTokenMean}{6.99}

\providecommand{\datRepTokenPositiveN}{}
\renewcommand{\datRepTokenPositiveN}{11}

\providecommand{\datRepByteMean}{}
\renewcommand{\datRepByteMean}{6.70}

\providecommand{\datRepBytePositiveN}{}
\renewcommand{\datRepBytePositiveN}{11}

%% file: boundary_numbers.tex
\newcommand{\boundaryVocabBudget}{32{,}768}
\newcommand{\boundaryEnTenRecovery}{85.2}
\newcommand{\boundaryEnBptZero}{4.75}
\newcommand{\boundaryEnBptFull}{6.24}
\newcommand{\boundaryEnTenCiLo}{50.5}
\newcommand{\boundaryEnTenCiHi}{100.0}
\newcommand{\boundaryEnFitBytes}{8,434,722}
\newcommand{\boundaryZhTenRecovery}{100.0}
\newcommand{\boundaryZhBptZero}{4.84}
\newcommand{\boundaryZhBptFull}{5.22}
\newcommand{\boundaryZhTenCiLo}{38.7}
\newcommand{\boundaryZhTenCiHi}{100.0}
\newcommand{\boundaryZhFitBytes}{8,415,699}
\newcommand{\boundaryZhSatQ}{8{,}192}
\newcommand{\boundaryZhSatW}{3{,}276}
\newcommand{\boundaryZhSatBpt}{4.828}

%% file: results.tex
\paragraph{Protocol.}
How much compression does the boundary rule exclude, independently of the vocabulary learner?
The main compression study uses the 20231101 Wikipedia snapshot in twelve languages,
with approximately $42$\,MB per language, $K=32768$ non-byte entries and $L=16$ bytes.
The o200k-style regex~\citep{o200ktiktoken} and unrestricted regime preserve the same document boundaries.
BPE supplies the standard merge baseline, Unigram supplies a likelihood-based objective,
and Two-stage BPE tests boundary relaxation during merge training. Each baseline receives the same corpus, length cap and vocabulary budget. All reported
compression scores use minimum-count decoding of the released byte vocabulary under the
specified cuts. The Two-stage BPE baseline is our merge curriculum; the official released
SuperBPE vocabulary is evaluated separately in the large-vocabulary LM study.
Appendices~\ref{app:data} and~\ref{app:baselines} specify corpus preparation and baseline construction.

\input{tables/tab_headline}

\paragraph{A large cost hidden by a small constrained gap.}
English BPE lies $\datbpeGapEnRegex\%$ above the regex-constrained token-count lower bound.
Removing the cuts raises this gap to $\datbpeGapEnNone\%$
(Table~\ref{tab:headline}). The boundary constraint is therefore essential to the
near-optimality comparison: proximity to the constrained bound says little about the
unrestricted optimum. \slot{} improves on unrestricted BPE by
$\datslotVsBpeEnNone\%$ in achieved token count.

The English optimal tax lies in $[\dattaxEnLo,\dattaxEnHi]\%$;
Chinese yields $[\dattaxZhLo,\dattaxZhHi]\%$.
These intervals use a lower bound from one regime and a feasible vocabulary from the
other, as in \eqref{eq:taxinterval}. They quantify the cost of the boundary constraint
without treating an achieved vocabulary ratio as the optimum.

\begin{figure}[t]
\centering
\includegraphics[width=\textwidth]{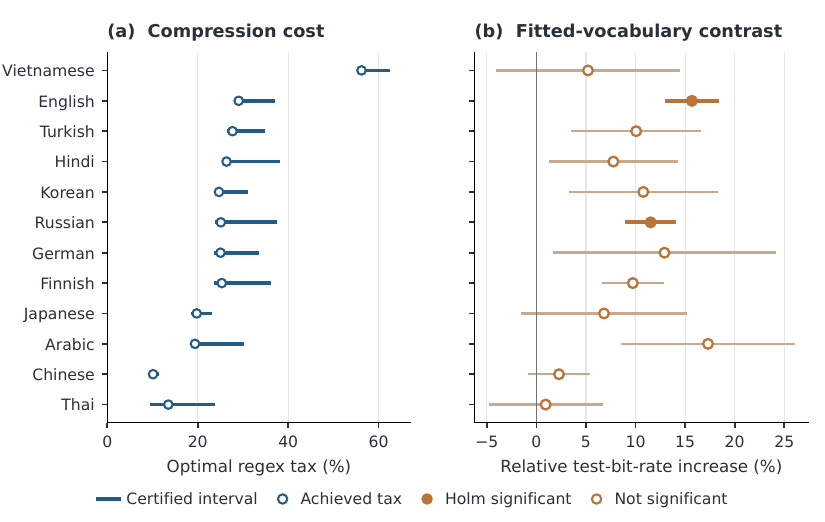}
\caption{\textbf{Compression cost and test prediction cost across languages.}
(a) Certified intervals for the optimal regex tax on Wikipedia; open circles show the
corresponding achieved vocabulary ratios.
(b) Paired test-bit-rate difference between unrestricted and regex-trained \slot{}
dictionaries after within-seed rate selection, with unadjusted $95\%$ paired-$t$ intervals over three seeds per language;
filled markers are Holm-significant at $0.05$.
Blue marks the compression measurements of panel (a); gold marks the prediction contrast
of panel (b).
The panels use their respective corpora and protocols; rows align the language-specific
measurements from the two studies. Languages share one ordering, sorted by the certified
lower endpoint of the panel-(a) interval. The two costs rank the languages differently:
Vietnamese pairs the highest certified tax with a prediction penalty below the panel
median, while Arabic pairs the largest prediction penalty with a certified tax below the
panel median (Tables~\ref{tab:compression_all} and~\ref{tab:testpanel}).}
\label{fig:tradeoff}
\end{figure}

\paragraph{Language variation.}
The achieved pre-tokenisation tax ranges from $\dattaxMin\%$ to $\dattaxMax\%$ across the panel.
Figure~\ref{fig:tradeoff} shows each language's optimisation interval. In the unrestricted regime,
\slot{} uses fewer tokens than both BPE and the local Two-stage baseline in all twelve languages.
Appendix~\ref{app:complete} gives the complete scores (Table~\ref{tab:compression_all}).

\paragraph{Budgets, regexes and deployed tokenisers.}
We next vary the vocabulary budget and boundary rule to test the scope of the headline result. In the vocabulary-budget sweep, the achieved tax and both endpoints of its certified
interval increase across the tested budgets in English and Chinese, reaching
$[\dattaxKmaxEnLo,\dattaxKmaxEnHi]\%$ in English (achieved $\dattaxKmaxEnAch\%$) and
$[\dattaxKmaxZhLo,\dattaxKmaxZhHi]\%$ in Chinese (achieved $\dattaxKmaxZhAch\%$) at the
largest budget, while the English constrained BPE gap narrows from $\datbpeGapKSFirst\%$ to
$\datbpeGapKSLast\%$ (Table~\ref{tab:ksweep}). Replacing the boundary regex by the
other deployed pre-tokenisers of Table~\ref{tab:regexes} leaves overlapping tax intervals
within $[\dattaxRegexEnvLo,\dattaxRegexEnvHi]\%$ on this corpus.
Appendix~\ref{app:sweeps} also reports the achieved compression of released vocabularies
under their native and minimum-count decoders (Table~\ref{tab:deployed}).

\paragraph{What limits the interval?}
\label{sec:validate}
Small-instance comparisons distinguish certificate optimisation from relaxation error.
On $\datnValid$ small instances, occurrence-price optimisation reaches
$\datvalOursLo$--$\datvalOursHi\%$ of the LP objective. Available integer solves
separate the LP--IP gap from vocabulary-search error (Table~\ref{tab:validate}; Appendix~\ref{app:small}).
A large primal--dual gap can reflect incomplete price optimisation, a suboptimal
vocabulary, or a gap between the LP and integer optima. The interval remains valid;
its width measures the compression uncertainty left by these three sources.

%% file: tables/tab_headline.tex
\begin{table}[t]
\centering
\small
\renewcommand{\arraystretch}{1.12}
\setlength{\tabcolsep}{4pt}
\caption{\textbf{Removing cuts changes the compression optimum.} Bytes per token (higher is better), with token-count gap to the integer-checked lower bound in parentheses (\%), rounded upwards. The italic row is a conservative upper ceiling on achievable bytes per token, rounded upwards. Two-stage BPE uses an unconstrained merge curriculum and has no regex-constrained entry (--). Unigram optimizes likelihood, a different objective from minimum token count. English and Chinese Wikipedia, $K=32768$, $L=16$, approximately $42$\,MB per language; all methods use minimum-count encoding under the indicated cuts.}
\label{tab:headline}
\begin{tabular}{lrrrr}
\toprule
& \multicolumn{2}{c}{English} & \multicolumn{2}{c}{Chinese}\\ \cmidrule(lr){2-3}\cmidrule(lr){4-5} Method & Regex & None & Regex & None\\
\midrule
\emph{Optimum: upper ceiling} & \emph{4.642} & \emph{6.314} & \emph{4.593} & \emph{5.084}\\
\slot{} & \textbf{4.616} (0.6) & \textbf{5.958} (6.0) & \textbf{4.582} (0.3) & \textbf{5.045} (0.8)\\
BPE & 4.551 (2.1) & 5.697 (10.9) & 4.545 (1.1) & 4.978 (2.2)\\
Two-stage BPE & -- & 5.898 (7.1) & -- & 4.967 (2.4)\\
Unigram & 2.894 (60.4) & 5.735 (10.1) & 4.006 (14.7) & 4.787 (6.3)\\
\bottomrule
\end{tabular}
\end{table}

%% file: lm.tex
The compression certificate measures sequence length. We next ask whether the additional vocabulary
freedom improves prediction. The primary comparison changes vocabulary construction while holding the
LM-stream decoder fixed: \slot{}$_\pi$ is fitted with regex boundaries and \slot{} without them,
then both encode complete documents by unrestricted minimum-count Viterbi decoding.
The contrast measures prediction from the resulting dictionaries under a common decoder.
Appendix~\ref{app:panelmech} records decoder protocols
and differences in vocabulary-search effort.

\paragraph{A held-out test comparison in twelve languages.}
For each language, the two arms share an $85$M non-embedding-parameter Transformer, a padded vocabulary
of $33{,}152$ entries, a $2048$-token context, an optimiser, a schedule, and $0.629$\,G training tokens.
Rates are selected separately for each arm and seed on a balanced validation grid. We evaluate the
selected runs on held-out test text and pair them over three training seeds. Bits per byte uses the
exact byte lengths of the scored target tokens (Appendix~\ref{app:testprotocol}).

The unrestricted fitted dictionaries have higher mean test bits per byte in all $\datTestN$ languages
(Appendix~\ref{app:lmfull}, Table~\ref{tab:testpanel}). The relative increases range from $\datTestLo\%$ to $\datTestHi\%$,
with an equal-language mean of $\datTestMean\%$ ($95\%$ language-bootstrap interval, conditional on observed language means,
$\datTestCiLo$--$\datTestCiHi\%$). A one-sided sign test gives $p=\datTestSignP$ under independent
language-level signs. Across the separate Wikipedia and language-model corpora, the achieved
tax and test penalty have Spearman rank correlation $\dattaxPenaltyRho$ (permutation
$p=\dattaxPenaltyRhoP$), a descriptive comparison across the twelve languages.

Effects vary across languages and seeds; Holm-corrected paired tests resolve $\datTestHolmN$
of the $\datTestN$ contrasts. Here, rates are selected within each evaluation seed.
The following comparison uses disjoint tuning and evaluation seeds (Appendix~\ref{app:gpuplan}).

\paragraph{Independent tuning and evaluation.}
Does the contrast persist when learning-rate selection uses different seeds from evaluation?
We select rates using $\datRepTuneSeedN$ tuning seeds and compare the selected configurations
on $\datRepSeedN$ independent evaluation seeds per language, retaining the same model
geometry and $\datRepTokenBudget$ training-token budget. Both dictionaries use unrestricted
Viterbi decoding. Training windows share sampled raw-byte start positions; evaluation
scores a common raw-byte interval under each decoder. Appendix~\ref{app:gpuplan} gives
the selection and context protocols.

Table~\ref{tab:replication} reports independent token blocks, byte-capped histories,
and sliding token contexts.
With $2048$-token blocks, unrestricted fitting increases mean test bits per byte in
$\datRepTokenPositiveN$ of $\datRepLangN$ languages, with an equal-language mean of
$\datRepTokenMean\%$. Capping the available history at $1024$ raw bytes retains the
same direction in $\datRepBytePositiveN$ languages and a mean of $\datRepByteMean\%$.
Thai has small negative mean contrasts in both modes, with Holm-adjusted $p$-values above
$0.05$. The positive cross-language mean persists under independent rate selection and
the common byte-history cap. History capping controls evaluation context;
training-byte exposure and vocabulary content differ between arms.

\input{tables/tab_replication_summary}

\paragraph{Scale and training allocation.}
Does the prediction penalty persist when model size and training budget increase together?
Across the measured English $D{=}20N$ ladder (20 training tokens per non-embedding parameter),
the \slot{} penalty against BPE falls from $\datcoFirst\%$ at $\datcoFirstN$M to
$\datcoLast\%$ at $\datcoLastN$M (Appendix~\ref{app:co}).
Figure~\ref{fig:scale} varies model size at a fixed token budget; Figure~\ref{fig:conv}
tracks learning against token and estimated byte exposure. Table~\ref{tab:lmscale}
summarises scale and training-allocation comparisons (Appendix~\ref{app:lrvar}).
At $996$M non-embedding parameters, $K=200$k and $D=20N$, locally trained Two-stage BPE
and the released SuperBPE vocabulary have penalties of $\datxxlSuperVsBpe\%$ and
$\datxxlSuperOffVsBpe\%$ against native BPE with regex boundaries.
These broader comparisons use different decoder protocols; the \slot{}/\slot{}$_\pi$
pair isolates dictionary differences under a common decoder.

\paragraph{Byte exposure and linguistic evaluation.}
Can training-byte exposure account for the downstream deficit?
In a separate $387$M control, the mean deficit on the Benchmark of Linguistic Minimal
Pairs (BLiMP; \citealp{warstadt2020blimp}) is larger under matched bytes than under matched
tokens ($\datbbDefBytes$ versus $\datbbDefTok$ points); the difference in budget response
remains unresolved with $\datbbSeedN$ seeds per cell (Appendix~\ref{app:downstream}).
Figure~\ref{fig:blimpuid} breaks down the $85$M and $996$M contrasts by linguistic field
and phenomenon. At $1.88$B, the English BLiMP comparisons use $\datyyNBpe$ checkpoints
per arm; their descriptive bands do not resolve whether attenuation continues.
Together, these controls establish the trade-off in the measured settings while leaving
its scaling and linguistic mechanism open.

%% file: tables/tab_replication_summary.tex
\begin{table}[t]
\centering
\small
\setlength{\tabcolsep}{5pt}
\renewcommand{\arraystretch}{1.07}
\caption{\textbf{Independent tuning preserves the average prediction penalty.} Relative test bits-per-byte change (\%) from the regex-fitted to the unrestricted dictionary; positive values favor the regex-fitted dictionary. Entries are paired means $\pm$ standard deviations over three evaluation seeds, disjoint from the two tuning seeds. $\dagger$ marks a two-sided paired test passing Holm correction over twelve languages within the indicated evaluation mode ($\alpha=0.05$).}
\label{tab:replication}
\begin{tabular}{lrrr}
\toprule
Language & Token blocks & Byte history & Sliding context\\
\midrule
English & $+10.44\,\pm\,1.69$ & $+9.90\,\pm\,1.19^{\dagger}$ & $+10.68\,\pm\,1.81$\\
German & $+11.06\,\pm\,0.84^{\dagger}$ & $+9.61\,\pm\,0.74^{\dagger}$ & $+11.36\,\pm\,0.85^{\dagger}$\\
Russian & $+7.45\,\pm\,1.77$ & $+5.75\,\pm\,1.62$ & $+7.94\,\pm\,1.83$\\
Finnish & $+6.41\,\pm\,1.03$ & $+6.01\,\pm\,0.70^{\dagger}$ & $+6.57\,\pm\,1.11$\\
Turkish & $+8.37\,\pm\,0.69^{\dagger}$ & $+6.88\,\pm\,0.59^{\dagger}$ & $+8.84\,\pm\,0.67^{\dagger}$\\
Vietnamese & $+3.42\,\pm\,0.58$ & $+3.78\,\pm\,0.48^{\dagger}$ & $+3.44\,\pm\,0.56$\\
Arabic & $+13.69\,\pm\,1.31^{\dagger}$ & $+11.13\,\pm\,1.46^{\dagger}$ & $+14.04\,\pm\,1.35^{\dagger}$\\
Hindi & $+10.22\,\pm\,0.38^{\dagger}$ & $+14.73\,\pm\,0.80^{\dagger}$ & $+10.58\,\pm\,0.22^{\dagger}$\\
Thai & $-0.59\,\pm\,0.36$ & $-0.11\,\pm\,0.34$ & $-0.39\,\pm\,0.35$\\
Korean & $+5.35\,\pm\,0.74$ & $+5.12\,\pm\,0.76^{\dagger}$ & $+5.43\,\pm\,0.71^{\dagger}$\\
Japanese & $+3.36\,\pm\,0.65$ & $+2.62\,\pm\,0.61$ & $+3.16\,\pm\,0.38^{\dagger}$\\
Chinese & $+4.62\,\pm\,1.07$ & $+4.95\,\pm\,1.27$ & $+5.36\,\pm\,0.96$\\
\bottomrule
\end{tabular}
\end{table}

%% file: boundary_frontier.tex
\paragraph{A nested family of boundary permissions.}
Boundary licences measure how compression changes as crossing permissions expand,
with the global vocabulary budget and candidate universe held fixed.

Let $V$ contain at most $K$ multibyte strings and let $W\subseteq V$ contain
at most $q$ \emph{crossing-licensed} strings. An occurrence wholly within a
pretoken may use any string in $V$; an occurrence crossing one or more cuts
must use a string in $W$. All byte tokens remain available without charge.
The licence belongs to a string, while legality is checked at each occurrence:
a type can appear both within a pretoken and across a cut. Thus $q$ counts
licensed types, irrespective of how often they cross.

Write $\mathrm{OPT}_{\pi,q}(K)$ for the minimum token count under this rule.
The feasible sets expand with $q$, with exact endpoints
\begin{equation}
\mathrm{OPT}_{\pi,0}(K)=\mathrm{OPT}_{\pi}(K),\qquad
\mathrm{OPT}_{\pi,K}(K)=\mathrm{OPT}_{\varnothing}(K).
\label{eq:boundary-endpoints}
\end{equation}
In this family, the occurrence rule is enforced when scoring fitting and held-out
compression. Section~\ref{sec:lm} uses a separate fitted-dictionary comparison
with unrestricted Viterbi decoding.

\begin{samepage}
\paragraph{A certificate for intermediate budgets.}
The certificate extends without assigning a string to a single boundary
class. For non-negative occurrence prices, let $A_t$ and $B_t$ be the total
prices on within-pretoken and crossing occurrences of string $t$,
respectively. Replace the top-$K$ term by
\begin{equation}
H_{K,q}(A,B)=
\max_{W\subseteq V,\,|V|\le K,\,|W|\le q}
\left(\sum_{t\in V}A_t+\sum_{t\in W}B_t\right).
\label{eq:boundary-support}
\end{equation}
Then $\sum_n\mathrm{SP}_n(1+\lambda)-H_{K,q}(A,B)$ is a lower bound on
$\mathrm{OPT}_{\pi,q}(K)$.\par
\end{samepage}

To evaluate the support exactly, pad with zero-bid types to obtain $m\ge K$ candidates
and sort them by decreasing $B_t$. In an optimal vocabulary, the $q$ licences can be
assigned to its first $q$ selected types in this order. A split $p$ separates them
from the $K-q$ unlicensed selections, giving
\begin{equation}
H_{K,q}(A,B)=\max_{q\le p\le m-K+q}
\left\{\operatorname{TopSum}_{q}(A+B;1{:}p)
+\operatorname{TopSum}_{K-q}(A;p+1{:}m)\right\}.
\label{eq:boundary-split}
\end{equation}
Here $\operatorname{TopSum}_r(v;I)$ sums the $r$ largest entries indexed by $I$, with
$\operatorname{TopSum}_0=0$. Prefix and suffix heap passes evaluate all splits in
$O(m\log(K+1))$ time after sorting. Appendix~\ref{app:boundary-proof} proves the formula
and relates the resulting certificate to the joint LP relaxation.

\begin{samepage}
\paragraph{Measured compression frontiers.}
How much of the endpoint gain requires only a small licence budget?
The English and Chinese compression experiments test $q=\lfloor fK\rfloor$ for
$f\in\{0,0.02,0.05,0.1,0.25,1\}$. Each point exports its vocabulary,
licences, full candidate graph and dyadic certificate witness. Figure~\ref{fig:boundary-frontier} distinguishes achieved endpoint recovery from
certified bounds on optimal recovery on the fitting corpus. Disjoint held-out records
provide compression and token diagnostics.\par
\end{samepage}
\input{boundary_results}

\paragraph{A cut-placement control.}
Does preserving piece lengths also preserve the compression cost of a boundary rule?
\input{boundary_control_results}

The full protocols, held-out compression and token diagnostics appear in
Appendix~\ref{app:boundary-protocol}.

%% file: boundary_results.tex
\begin{figure}[t]
\centering
\includegraphics[width=\textwidth]{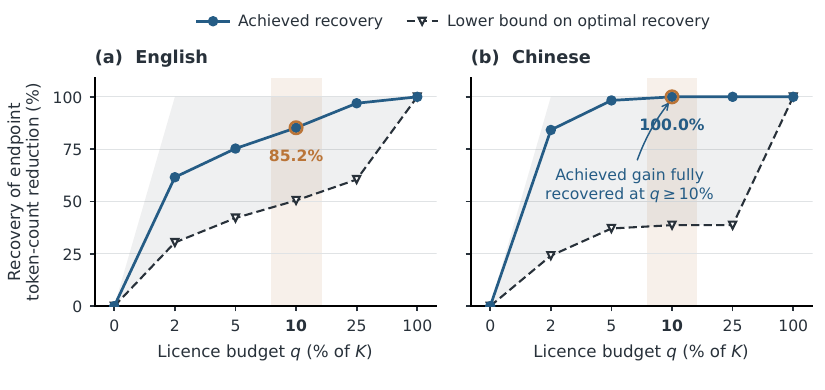}
\caption{\textbf{Small licence budgets recover most of the achieved endpoint gain.} Compression recovery as the boundary-licence budget grows, at $K=\boundaryVocabBudget$
multibyte slots and $L=16$; the fitting corpora contain \boundaryEnFitBytes{} English and \boundaryZhFitBytes{}
Chinese bytes. (a) English. (b) Chinese. Solid curves: achieved share of the endpoint
token-count reduction, comparing exported occurrence-aware vocabularies. Dashed curves
and shaded wedge: certified lower bound on the recovered share of the \emph{optimal}
endpoint gain (Equation~\ref{eq:boundary-recovery}); at every interior budget the
certified upper bound is $100\%$. Recovery is exactly $0\%$ at $q=0$ and $100\%$ at $q=K$ by definition. These are optimisation
bounds, not sampling uncertainty. The highlighted column marks the $10\%$ licence budget.
Tested budgets are equally spaced for display; connecting lines do not
interpolate certificates.}
\label{fig:boundary-frontier}
\end{figure}

For English, achieved fitting compression rises from $\boundaryEnBptZero$ to $\boundaryEnBptFull$ bytes per token across the endpoints. The $10\%$ licence budget recovers $\boundaryEnTenRecovery\%$ of this achieved token-count reduction. Its certified share of the optimal endpoint gain lies in $[\boundaryEnTenCiLo,\boundaryEnTenCiHi]\%$.
For Chinese, achieved fitting compression rises from $\boundaryZhBptZero$ to $\boundaryZhBptFull$ bytes per token across the endpoints. The $10\%$ licence budget recovers $\boundaryZhTenRecovery\%$ of this achieved token-count reduction. Its certified share of the optimal endpoint gain lies in $[\boundaryZhTenCiLo,\boundaryZhTenCiHi]\%$. Chinese licence use saturates: offered $q=\boundaryZhSatQ$, the fit takes only $\boundaryZhSatW$ licences, with held-out compression unchanged from the $10\%$ rung ($\boundaryZhSatBpt$ bytes per token).
At a $2\%$ licence budget, the achieved count is below the zero-licence certified lower bound in both languages, proving an improvement over every zero-licence vocabulary in this model on these fitting corpora.
Figure~\ref{fig:boundary-frontier} plots both series; Tables~\ref{tab:boundary-frontier} and~\ref{tab:boundary-recovery} report the per-budget diagnostics and the certified recovery intervals.

%% file: boundary_control_results.tex
A separate $K=8{,}192$ cut-placement control preserves each document's cut count and Unicode-character piece-length multiset while permuting their order. At zero licences, the relative change in optimal token count from regex to random cuts is enclosed by independent certificates. For English, the optimal token-count change is $[47.1,53.5]\%$. For Chinese, the optimal token-count change is $[3.9,6.5]\%$. Thus cut placement affects optimal compression even when the cut count and Unicode-character piece-length multiset are fixed. Table~\ref{tab:boundary-controls} gives the complete budget grid. At full licences both cut sets define the same true optimum.

%% file: proofs.tex
Throughout, the corpus is nonempty, $\mathcal{C}$ and $G_n$ are defined in
\S\ref{sec:problem}, and $K\ge1$ is an integer. In a cut regime, $E(t)$ contains
only surviving occurrences. For the cardinality-constrained problem, types with no
surviving occurrence are omitted without changing the optimum, so $n_t=|E(t)|>0$.
Write
$\Lambda_t=\sum_{e\in E(t)}\lambda_e$ and order the bids as
$\Lambda_{(1)}\ge\Lambda_{(2)}\ge\cdots$, padding with zeros beyond $|\mathcal{C}|$.
Set $\lambda_e=0$ on byte edges. For a chosen shortest-path segmentation, $m_t$ counts the
used occurrences of $t$ and $\rho_t=m_t/n_t$.

\subsection{Vocabulary selection and price duality}
\label{app:price-duality}
\label{app:prop1}
\label{app:lpdual}

\paragraph{Integer formulation.}
\begin{proposition}
\label{prop:ipexact}
$\mathrm{val}(\mathrm{IP})=\OPT(K)$ for the program \eqref{eq:ip}.
\end{proposition}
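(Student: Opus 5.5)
The plan is to prove the two inequalities $\mathrm{val}(\mathrm{IP})\le\OPT(K)$ and $\mathrm{val}(\mathrm{IP})\ge\OPT(K)$ separately, each by an explicit translation between feasible solutions that preserves the objective value.

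\emph{Upper bound.} Let $S\subseteq\mathcal C$ with $|S|\le K$ attain \eqref{eq:opt}.
\begin{itemize}
\item Set $y_t=1$ exactly for $t\in S$, so $\sum_t y_t\le K$.
\item For each document $n$, fix a minimum-count segmentation $\sigma_n$ of $b^{(n)}$ over $\Sigma\cup S$.
\item Every token of $\sigma_n$ is either a byte or a string $t\in S$ of length $2$ to $L$ occurring at a specific position, so it is an edge of $G_n$ legal in the regime. In a cut regime, every token used by a feasible segmentation lies inside a pretoken, so its occurrence survives.
\item Let $x^{(n)}$ be the indicator of these edges. Consecutive tokens concatenate to $b^{(n)}$, so they form a source-to-sink path, and $A_nx^{(n)}=d_n$.
\item Linking holds: each multibyte edge used has $t(e)\in S$, hence $y_{t(e)}=1$.
\end{itemize}
The objective is $\sum_n|\sigma_n|=\OPT(K)$, so $\mathrm{val}(\mathrm{IP})\le\OPT(K)$.

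\emph{Lower bound.} Take any feasible integer $(x,y)$ and set $S=\{t:y_t=1\}$, so $|S|\le K$.
\begin{itemize}
\item Because $G_n$ is acyclic, a 0/1 vector with $A_nx^{(n)}=d_n$ decomposes into exactly one source-to-sink path. There can be no cycle component, so nothing besides the path carries flow.
\item The labels along that path concatenate to $b^{(n)}$. Each label is a byte or a string in $S$, by the linking constraint.
\item The path's edge count is therefore at least $\mathrm{seg}(b^{(n)};\Sigma\cup S)$.
\end{itemize}
Summing over $n$ gives $\sum_e x_e\ge\OPT(K)$, hence $\mathrm{val}(\mathrm{IP})\ge\OPT(K)$.

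\emph{Main subtlety: the cut regime.} The point needing care is that $\OPT_\pi$ should be read as minimum segmentation where tokens may not cross cuts. Equivalently, $\mathrm{seg}$ is a shortest path in the surviving-edge graph. Omitting candidate types with no surviving occurrence does not change the optimum, because such types cannot be used anyway. So deleting them from $S$ keeps it feasible with the same count.

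Byte edges are always present, which guarantees a feasible path exists and both values are finite. Edges are at most $L$ bytes long and acyclicity follows from positions strictly increasing; both should be noted explicitly. The flow-decomposition step is the only nonroutine ingredient, and I would state it as the standard fact that an integral unit $s$–$t$ flow in a DAG is a single path.
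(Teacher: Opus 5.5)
Your proof is correct and follows the paper's argument essentially step for step. For $\mathrm{val}(\mathrm{IP})\le\OPT(K)$ you take the indicator of an optimal vocabulary together with per-document minimum-count paths, and for the reverse inequality you use the fact that a $0/1$ unit flow in a DAG is a single source-to-sink path, with the linking constraints forcing every multibyte edge on it into $S$; your explicit handling of the cut regime is a detail the paper absorbs by defining $x^{(n)}$ directly as a shortest path in the usable subgraph of $G_n$.
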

\begin{proof}
($\le$) Let $S^\star$ attain \eqref{eq:opt}. Set $y=\mathbf{1}_{S^\star}$ and let $x^{(n)}$ be the
indicator of a shortest $0\!\to\!|b^{(n)}|$ path in $G_n$ restricted to edges usable under $S^\star$;
byte edges ensure that such a path exists. The pair
$(x,y)$ is feasible and has objective $\sum_n\mathrm{seg}(b^{(n)};\Sigma\cup S^\star)=\OPT(K)$.

($\ge$) Let $(x,y)$ be feasible and integral and put $S=\{t:y_t=1\}$, so $|S|\le K$. The support of
a $0/1$ unit flow is a single directed path plus circulations, and a DAG has no directed cycles, so
the flow is a single directed path; the linking constraints force every token
edge on that path to have $t(e)\in S$. Hence the path is a valid segmentation of $b^{(n)}$ over
$\Sigma\cup S$, its length is at least $\mathrm{seg}(b^{(n)};\Sigma\cup S)$, and summing over $n$
gives a value at least that of \eqref{eq:opt} at $S$, hence at least $\OPT(K)$.
\end{proof}

\paragraph{The LP and its dual.}

Let (LP) be \eqref{eq:ip} with $x\ge0$ and $y\in[0,1]$ (the constraint $x\le1$ is implied by unit
flow on a DAG and may be dropped). Assigning free potentials $\phi^n_i$ to the flow equalities,
$\lambda_e\ge0$ to $x_e\le y_{t(e)}$, $\theta\ge0$ to $\sum_ty_t\le K$ and $w_t\ge0$ to $y_t\le1$,
the dual of (LP) is
\begin{equation}
\label{eq:dual}
\begin{aligned}
\text{(D)}\qquad \max_{\lambda,\theta,w\,\ge\,0,\ \phi\text{ free}}\ \
 &\sum_n\left(\phi^n_{|b^{(n)}|}-\phi^n_0\right)\;-\;K\theta\;-\;\sum_{t\in\mathcal{C}}w_t\\[-2pt]
\text{s.t.}\qquad
 &\phi^n_j-\phi^n_i\,\le\,1+\lambda_e \quad\text{for every token edge } e=(i,j)\in G_n,\\[-2pt]
 &\phi^n_{i+1}-\phi^n_i\,\le\,1 \quad\text{for every byte edge},\\[-2pt]
 &\Lambda_t\,\le\,\theta+w_t \quad\text{for every } t\in\mathcal{C}.
\end{aligned}
\end{equation}
For an (LP)-feasible $(x,y)$ and a (D)-feasible $(\phi,\lambda,\theta,w)$, the
first two constraints of \eqref{eq:dual} read $\phi_{j(e)}-\phi_{i(e)}-\lambda_e\le1$ for every edge,
and $x\ge0$, so
\begin{equation}
\sum_e x_e\ \ge\ \sum_e\left(\phi_{j(e)}-\phi_{i(e)}\right)x_e-\sum_e\lambda_ex_e
\ =\ \sum_n\left(\phi^n_{|b^{(n)}|}-\phi^n_0\right)-\sum_e\lambda_ex_e ,
\label{eq:wd1}
\end{equation}
the last step by flow conservation: $x^{(n)}$ carries one unit from node $0$ to node $|b^{(n)}|$, so
the telescoping sum of potential differences is $\phi^n_{|b^{(n)}|}-\phi^n_0$. For the remaining term,
\begin{equation}
\sum_e\lambda_ex_e=\sum_{t\in\mathcal{C}}\sum_{e\in E(t)}\lambda_ex_e
\ \le\ \sum_t\Lambda_ty_t\ \le\ \sum_t(\theta+w_t)y_t\ \le\ K\theta+\sum_tw_t ,
\label{eq:wd2}
\end{equation}
using $x_e\le y_{t(e)}$ with $\lambda\ge0$, then $\Lambda_t\le\theta+w_t$ with $y\ge0$, then
$\sum_ty_t\le K$ with $\theta\ge0$ and $y_t\le1$ with $w\ge0$. Combining \eqref{eq:wd1} and
\eqref{eq:wd2} gives $\mathrm{val}(\mathrm{D})\le\mathrm{val}(\mathrm{LP})$.

\paragraph{Price certificate (Theorem~\ref{thm:cert}).}
\begin{proof}
Fix $\lambda\ge0$ and construct a (D)-feasible point explicitly. Let $\phi^n_i$ be the length of a
shortest $0\!\to\!i$ path in $G_n$ under costs $1$ on byte edges and $1+\lambda_e$ on token edges
(all token edges present), so $\phi^n_0=0$, $\phi^n_{|b^{(n)}|}=\SP_n(1+\lambda)$, and the two
potential constraints of \eqref{eq:dual} hold by definition of a shortest-path distance. Put
$\theta:=\Lambda_{(K)}$ and $w_t:=\max(0,\Lambda_t-\theta)$; then $\lambda,\theta,w\ge0$ and
$\Lambda_t\le\theta+w_t$ for every $t$, so the point is feasible. Its objective is
\[
\sum_n\SP_n(1+\lambda)-K\Lambda_{(K)}-\sum_{t}\max(0,\Lambda_t-\Lambda_{(K)})
=\sum_n\SP_n(1+\lambda)-\sum_{i=1}^{K}\Lambda_{(i)} ,
\]
because with $r:=|\{t:\Lambda_t>\Lambda_{(K)}\}|\le K-1$ we have
$K\Lambda_{(K)}+\sum_{i\le r}(\Lambda_{(i)}-\Lambda_{(K)})=\sum_{i\le r}\Lambda_{(i)}+(K-r)\Lambda_{(K)}
=\sum_{i\le K}\Lambda_{(i)}$, the last equality because ranks $r{+}1,\dots,K$ all carry the value
$\Lambda_{(K)}$. Weak duality for \eqref{eq:dual} and
$\mathrm{val}(\mathrm{LP})\le\mathrm{val}(\mathrm{IP})=\OPT(K)$ (Proposition~\ref{prop:ipexact}) give
\eqref{eq:cert}.
\end{proof}

For later use, eliminating the potentials and $w$ at fixed $(\lambda,\theta)$ gives
\begin{equation}
\label{eq:lag}
\mathcal L(\lambda,\theta)
=\sum_n\SP_n(1+\lambda)-K\theta
-\sum_{t\in\mathcal C}\max(0,\Lambda_t-\theta).
\end{equation}

\begin{remark}[Optimal budget multipliers]
\label{rem:thetaties}
At fixed $\lambda$, the maximisers of $\mathcal{L}(\lambda,\theta)$ over $\theta\ge0$ form
$[\Lambda_{(K+1)},\Lambda_{(K)}]$. On each open interval between bid values, the slope is
$-K+|\{t:\Lambda_t>\theta\}|$; it is positive below the maximising interval and negative
above it. Every point in the interval gives
$\sum_n\SP_n(1+\lambda)-\sum_{i\le K}\Lambda_{(i)}$, including the selected endpoint
$\theta=\Lambda_{(K)}$. Zero padding covers $K\ge|\mathcal{C}|$.
\end{remark}

\paragraph{LP tightness of the price family (Theorem~\ref{thm:nogap}).}
\begin{proof}
At fixed $(\lambda,\theta)$, maximising the dual objective \eqref{eq:dual} over the
potentials gives the shortest-path value $\sum_n\SP_n(1+\lambda)$.
The optimal upper-bound multipliers are $w_t=\max(0,\Lambda_t-\theta)$.
Consequently,
\[
\max_{\phi,w:\,\mathrm{(D)}\text{ feasible}}\mathrm{obj}(\phi,\lambda,\theta,w)
=\mathcal{L}(\lambda,\theta).
\]
The primal LP is feasible by the byte-only segmentation and has non-negative objective.
LP strong duality therefore gives
$\max_{\lambda,\theta\ge0}\mathcal{L}(\lambda,\theta)=\mathrm{val}(\mathrm{LP})$,
with the maximum attained. Maximising over $\theta$ as in
Remark~\ref{rem:thetaties} gives the equivalent occurrence-price formulation
$\max_{\lambda\ge0}C_K(\lambda)=\mathrm{val}(\mathrm{LP})$, where $C_K$ is the
right-hand side of \eqref{eq:cert}.
\end{proof}

\begin{remark}[Optimisation and integrality gaps]
\label{rem:whatnot}
Theorem~\ref{thm:nogap} characterises the maximum over all occurrence prices. Before integer
rounding, a finite run, a restricted price family, and a feasible vocabulary can each leave a gap:
\[
\mathrm{LB}\ \le\ \mathrm{val}(\mathrm{LP})\ \le\ \OPT(K)\ \le\ \mathrm{UB}.
\]
The first difference is dual optimisation error, the second is the integrality gap, and the
third is primal suboptimality. Table~\ref{tab:validate} compares these quantities using numerical HiGHS reference solves on
tractable instances. For a positive lower bound, the reported $\mathrm{UB}/\mathrm{LB}-1$ bounds a vocabulary's relative
suboptimality from above without assuming that any of these differences vanishes. The integer
rounding in Proposition~\ref{prop:integer} strengthens the bound on $\OPT(K)$ and is not used
when comparing a dual objective with the LP optimum.
\end{remark}

\paragraph{Integer equality.}

\begin{corollary}[Self-certifying exactness]
\label{cor:exact}
Assume $K\le|\mathcal C|$. Suppose a shortest-path segmentation at $\lambda\ge0$ and a set $S^\star$ of $K$
largest bids satisfy $m_t=n_t$ on $S^\star$ and $m_t=0$ outside it.
Then $S^\star$ is an optimal vocabulary and
\[
C_K(\lambda)=\mathrm{val}(\mathrm{LP})=\OPT(K)=\mathrm{UB},
\]
where $\mathrm{UB}$ is the token count of that segmentation.
\end{corollary}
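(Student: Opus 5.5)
The plan is to sandwich all four quantities between the certificate value and the token count of the given segmentation. Theorem~\ref{thm:cert} and Theorem~\ref{thm:nogap} already give $C_K(\lambda)\le\mathrm{val}(\mathrm{LP})\le\OPT(K)$. Since $S^\star$ has at most $K$ entries, Proposition~\ref{prop:ipexact} will give $\OPT(K)\le\mathrm{UB}$ once we show that the given segmentation is feasible for $S^\star$. Feasibility follows from the hypothesis $m_t=0$ outside $S^\star$: every multibyte edge on the path has $t(e)\in S^\star$, and byte edges are always usable. So the segmentation is a legal encoding over $\Sigma\cup S^\star$ with token count $\mathrm{UB}$. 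It then suffices to prove the reverse inequality $\mathrm{UB}\le C_K(\lambda)$, which collapses the chain to equalities and shows that $S^\star$ attains $\OPT(K)$.

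For the reverse inequality, I will compute the priced cost of the chosen shortest-path segmentation. Summed over documents, it equals $\sum_n\SP_n(1+\lambda)$, because the segmentation is a shortest path in the complete graph under costs $1+\lambda_e$. This cost also equals the token count plus the prices of used occurrences:
\[
\sum_n\SP_n(1+\lambda)=\mathrm{UB}+\sum_t\sum_{e\in E(t)\text{ used}}\lambda_e .
\]
The key step is to identify the price term with $\sum_{t\in S^\star}\Lambda_t$. For $t\in S^\star$, the condition $m_t=n_t$ means every occurrence in $E(t)$ is used, so its contribution is exactly $\Lambda_t$. For $t\notin S^\star$, $m_t=0$ and the contribution vanishes. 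Since $S^\star$ consists of $K$ largest bids and $K\le|\mathcal C|$, no zero padding is involved, and $\sum_{t\in S^\star}\Lambda_t=\sum_{i\le K}\Lambda_{(i)}$. This holds even under ties, because any set of $K$ largest bids has the same sum. Rearranging gives $\mathrm{UB}=C_K(\lambda)$ exactly.

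The main point needing care is the use-count bookkeeping: "used" must be interpreted per occurrence edge, not per string, so that $m_t=n_t$ really means every edge in $E(t)$ lies on some document's chosen path. In a cut regime, $E(t)$ is the set of surviving occurrences, which is consistent with the convention fixed at the start of the appendix. Once this is settled, the chain $\mathrm{UB}=C_K(\lambda)\le\mathrm{val}(\mathrm{LP})\le\OPT(K)\le\mathrm{UB}$ forces equality throughout.
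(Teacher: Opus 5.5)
Your proposal is correct and follows the paper's proof essentially step for step. Feasibility of the segmentation over $\Sigma\cup S^\star$ gives $\OPT(K)\le\mathrm{UB}$. The full-use-on-$S^\star$, no-use-elsewhere condition identifies $\sum_n\SP_n(1+\lambda)$ with $\mathrm{UB}+\sum_{i\le K}\Lambda_{(i)}$, so $C_K(\lambda)=\mathrm{UB}$, and the chain $C_K(\lambda)\le\mathrm{val}(\mathrm{LP})\le\OPT(K)\le\mathrm{UB}$ then collapses to equalities. The bound $\OPT(K)\le\mathrm{UB}$ follows directly from the definition of $\OPT(K)$, so Proposition~\ref{prop:ipexact} is not needed there, though invoking it does no harm.
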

\begin{proof}
The segmentation is feasible for $\Sigma\cup S^\star$, so
$\OPT(K)\le\mathrm{UB}$. It uses every occurrence of each selected candidate and no
other token edges; hence
\[
\sum_n\SP_n(1+\lambda)=\mathrm{UB}+\sum_{t\in S^\star}\Lambda_t
=\mathrm{UB}+\sum_{i\le K}\Lambda_{(i)}.
\]
Thus $C_K(\lambda)=\mathrm{UB}$, and
$C_K(\lambda)\le\mathrm{val}(\mathrm{LP})\le\OPT(K)\le\mathrm{UB}$ forces equality
throughout. Occurrence prices need not be constant within a candidate.
\end{proof}

\subsection{Restricted pricing and exact verification}
\label{app:price-verification}

\paragraph{Restricted price families.}
\begin{lemma}[Restriction]
\label{lem:restrict}
\eqref{eq:cert} holds for \emph{every} $\lambda\ge0$ with no further conditions. In particular it
holds when coordinates of $\lambda$ are tied to one another, fixed to constants, or optimised over
any subset --- restricting the price family can only weaken the bound, never invalidate it. Likewise
$\sum_{i\le K}\Lambda_{(i)}$ must be the sum of the $K$ largest bids \emph{of the multiset over
$\mathcal{C}$}: if several distinct candidates share a price they contribute one term each.
\end{lemma}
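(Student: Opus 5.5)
The plan is to observe that the proof of Theorem~\ref{thm:cert} never uses any property of $\lambda$ beyond $\lambda\ge0$. For an arbitrary non-negative $\lambda$, whether produced by optimisation, tying, fixing, or restriction to an active set, I will run the same construction. Set $\phi^n_i$ to the shortest-path distance in the complete graph $G_n$ under costs $1+\lambda_e$. Then put $\theta:=\Lambda_{(K)}$ and $w_t:=\max(0,\Lambda_t-\theta)$. These choices give a (D)-feasible point whose objective equals $C_K(\lambda)$, and weak duality, \eqref{eq:wd1}--\eqref{eq:wd2}, together with Proposition~\ref{prop:ipexact}, yields $C_K(\lambda)\le\mathrm{val}(\mathrm{LP})\le\OPT(K)$. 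Since the restricted family is a subset of $\{\lambda\ge0\}$, every member satisfies \eqref{eq:cert}. Restriction only affects $\sup C_K$ over the family, which can be no larger than the supremum over all $\lambda\ge0$; this is the ``can only weaken'' clause, immediate from Theorem~\ref{thm:nogap}.

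For the multiset clause, I will re-inspect the step $\Lambda_t\le\theta+w_t$ for every $t\in\mathcal C$. The constraint in (D) is imposed once per distinct candidate type. Feasibility therefore requires the $w_t$ to be defined per type, and the identity $K\theta+\sum_t w_t=\sum_{i\le K}\Lambda_{(i)}$ holds only when the order statistics range over the multiset $\{\Lambda_t:t\in\mathcal C\}$, with ties counted with multiplicity. I will then show that collapsing tied values would break validity. Suppose $r$ distinct types share the value $c$ and are counted once. The subtracted quantity would then be smaller than the true top-$K$ sum, so $C_K$ would be overstated. The direct argument in the main text makes the failure concrete: a vocabulary may contain all $r$ tied types, its used prices are bounded only by the sum of the bids of its own entries, and that sum can exceed the deduplicated top-$K$ sum.

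The main obstacle is being careful about what ``every'' covers, in particular for background candidates that receive a single shared bid bound as described in Proposition~\ref{prop:bg}. Those candidates must remain in the shortest-path graph with their (possibly tiny) prices, and they must each appear as separate entries in the bid multiset. I will check that the proof's use of the complete graph $G_n$ and the per-type dual constraint handles both requirements, so no extra hypothesis is needed.
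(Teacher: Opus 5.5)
Your proposal is correct and follows the paper's own justification: the proof of Theorem~\ref{thm:cert} fixes an arbitrary $\lambda\ge0$ and builds a (D)-feasible point, and because (D) imposes $\Lambda_t\le\theta+w_t$ once per type $t\in\mathcal C$, the subtracted term is the top-$K$ sum of the bid multiset. Two small refinements: the ``can only weaken'' clause follows from set inclusion alone (Theorem~\ref{thm:nogap} is not needed), and the tie claim can be made concrete with documents \texttt{ab}, \texttt{cd}, \texttt{ef}, $L=2$, $K=2$ and unit occurrence prices, where the multiset certificate gives $6-2=4=\OPT(2)$ but counting the tied bids once would give $6-1=5$.
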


\paragraph{Uniform prices (Corollary~\ref{cor:unif}).}
\begin{proof}
Set $\lambda_e=\theta/n_{t(e)}$. Every candidate has bid $\theta$, so for
$K\le|\mathcal{C}|$ the top-$K$ sum is $K\theta$. Each path has cost
$|P|+\theta\sum_{e\in P\cap\mathrm{tok}}1/n_{t(e)}$, affine in $\theta$.
Its minimum over paths, minus $K\theta$, is therefore concave and piecewise linear.
When $K>|\mathcal{C}|$, subtracting $K\theta$ gives a weaker but still valid bound.

At a differentiability point, the derivative of this one-dimensional bound is
$\sum_t\rho_t-K$. An interior differentiable maximiser therefore satisfies
$\sum_t\rho_t=K$; at an interior kink, the left and right derivatives bracket zero.
A maximiser at $\theta=0$ instead requires only a nonpositive right derivative.
Thus usage fractions express the marginal response to a common slot price; they need not
sum to $K$ for an individual segmentation selected at a kink.
\end{proof}

\paragraph{Background pricing.}
\begin{proposition}[Background pricing]
\label{prop:bg}
Let $\mathcal A\subseteq\mathcal{C}$ with $|\mathcal A|\ge K$ be any active set, choose $\Lambda_t\ge0$ freely for
$t\in \mathcal A$, put $\theta:=$ the $K$-th largest bid over $\mathcal A$, and set $\Lambda_t:=\theta$ for every
$t\in\mathcal{C}\setminus \mathcal A$ (ties resolved in favour of $\mathcal A$). Then a set of $K$ largest bids of the
full multiset lies in $\mathcal A$, and \eqref{eq:cert} holds with the sum taken over $\mathcal A$ only.
\end{proposition}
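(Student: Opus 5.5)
The plan has two parts. First I show that some set of $K$ largest bids of the full multiset lies inside $\mathcal A$. Then I invoke Theorem~\ref{thm:cert}, which holds for every $\lambda\ge0$ by Lemma~\ref{lem:restrict}.

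\emph{Realising the bids.} The statement specifies bids $\Lambda_t$ rather than occurrence prices, so I need some $\lambda\ge0$ that produces them. For a background candidate $t\notin\mathcal A$, I set $\lambda_e=\theta/n_t$ on each of its occurrences. This is well defined because $n_t>0$: types with no surviving occurrence are omitted, as stated at the start of the appendix. It gives $\Lambda_t=\theta$. For $t\in\mathcal A$, I distribute the chosen $\Lambda_t\ge0$ over $E(t)$ in any non-negative way, for instance uniformly. The result is a legitimate non-negative price vector on the full candidate graph. Background candidates therefore stay in the shortest-path graph with positive cost, and $\sum_n\SP_n(1+\lambda)$ is the quantity appearing in Theorem~\ref{thm:cert}.

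\emph{Top-$K$ location.} Let $\theta$ be the $K$-th largest bid in $\mathcal A$; this exists because $|\mathcal A|\ge K$. Then $\mathcal A$ contains at least $K$ candidates with bid $\ge\theta$. Every candidate outside $\mathcal A$ has bid exactly $\theta$. So in the full multiset, every bid strictly larger than $\theta$ belongs to $\mathcal A$. Hence the top-$K$ positions consist of all bids exceeding $\theta$, which all come from $\mathcal A$, filled out by copies of the value $\theta$.

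Choosing those copies from $\mathcal A$ first is possible, since $\mathcal A$ has at least $K$ entries $\ge\theta$; this is the stated tie-breaking rule. The result is a set of $K$ largest bids contained in $\mathcal A$. Its sum is the sum of the $K$ largest bids of $\mathcal A$, and it equals $\sum_{i\le K}\Lambda_{(i)}$ over the full multiset, because tie-breaking does not change the value of a top-$K$ sum. Substituting this into \eqref{eq:cert} gives the claim.

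\emph{Main obstacle.} The only delicate step is the tie argument. Lemma~\ref{lem:restrict} requires each candidate in the full multiset to contribute its own term. So I must check that the many background copies of $\theta$ cannot push the $K$-th order statistic above $\theta$, and cannot add anything beyond what $\mathcal A$ already supplies. I would handle this explicitly with the counting argument above: let $r=|\{t:\Lambda_t>\theta\}|\le K-1$, where every such $t$ lies in $\mathcal A$. The full top-$K$ sum is then $\sum_{i\le r}\Lambda_{(i)}+(K-r)\theta$, which is the same expression as the top-$K$ sum over $\mathcal A$ alone. This mirrors the identity already used in the proof of Theorem~\ref{thm:cert}.
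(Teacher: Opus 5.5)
Your proposal is correct and follows essentially the same route as the paper. Like the paper, you realise the background bids by pricing each occurrence at $\theta/n_t$, note that at least $K$ active candidates have bid at least $\theta$ so a top-$K$ set can be chosen inside $\mathcal A$, and then apply Theorem~\ref{thm:cert}. Your explicit count with $r$ just spells out the tie step that the paper states in one line.
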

\begin{proof}
Every $t\notin \mathcal A$ has $\Lambda_t=\theta$ and, by the choice of $\theta$ as the $K$-th largest bid
over $\mathcal A$, at least $K$ members of $\mathcal A$ have $\Lambda_t\ge\theta$. A set of $K$ largest bids of the
multiset over $\mathcal{C}$ may therefore be chosen inside $\mathcal A$, so
$\sum_{i\le K}\Lambda_{(i)}=\sum_{t\in\mathcal{T}}\Lambda_t$ for some $\mathcal{T}\subseteq \mathcal A$ with
$|\mathcal{T}|=K$, and Theorem~\ref{thm:cert} applies verbatim. Setting $\Lambda_t=\theta$ for
$t\notin \mathcal A$ is realised in the shortest-path pass by pricing each occurrence of such a $t$ at
$\theta/n_t$, which is legal for any $\lambda\ge0$ by Lemma~\ref{lem:restrict}.
\end{proof}

\begin{remark}[Choosing the active set]
\label{rem:Afree}
For any $\mathcal A$ with $|\mathcal A|\ge K$, the bound applies to vocabularies over the full candidate set
$\mathcal{C}$. Choosing $\mathcal A$ affects tightness and computational cost. We select the $M$
candidates with largest $n_t(|t|-1)$, an upper bound on their savings relative to byte-only
segmentation. Inactive hapaxes can share a stored representation because each distinct
candidate retains its own bid $\theta$ and remains present in the shortest-path problem.
\end{remark}

\paragraph{Exact verification.}
\begin{lemma}[Certificate checking]
\label{lem:check}
Given $(\lambda,K)$ and the corpus, the right-hand side of \eqref{eq:cert} is computable by one
shortest-path pass per document plus a top-$K$ selection, in $O(BL+|\mathcal{C}|)$ time, provided the corpus, edge indexing and prices are specified. Validity requires only
$\lambda\ge0$; checking does not require the optimisation trajectory.
\end{lemma}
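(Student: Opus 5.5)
The statement has two parts, and I would handle them in sequence. The first is a complexity claim about evaluating the right-hand side of \eqref{eq:cert}. The second is that validity depends only on $\lambda\ge0$.

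\textbf{Computing the two terms.} I would split $C_K(\lambda)$ into its shortest-path term and its bid term.

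For the shortest-path term, each $G_n$ is a DAG on positions $0,\dots,|b^{(n)}|$. Every node $i$ has at most $L$ outgoing edges: the byte edge to $i+1$, plus at most one candidate occurrence of each length $2,\dots,L$ starting at $i$. This holds because an occurrence is determined by its start and length. Under the chosen cut regime some of these are deleted, which only lowers the count.

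Since positions are already topologically ordered, a single left-to-right dynamic-programming pass computes $\phi^n_j=\min_{e=(i,j)}(\phi^n_i+1+\lambda_e)$. This pass costs $O(L)$ per position, so $O(BL)$ summed over documents. It requires that the edge indexing map each (position, length) pair to its candidate identity and price in $O(1)$; that is exactly the hypothesis that ``edge indexing and prices are specified''. A suffix-array index, as in \S\ref{sec:algo}, supplies such an indexing, but the lemma takes it as given.

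For the bid term, one more pass over all $O(BL)$ edges accumulates $\Lambda_t=\sum_{e\in E(t)}\lambda_e$ into an array indexed by $t\in\mathcal C$, at cost $O(BL+|\mathcal C|)$. A linear-time selection algorithm (median of medians) then finds $\Lambda_{(K)}$ in $O(|\mathcal C|)$. Summing the bids above it and adding the appropriate number of tied copies gives $\sum_{i\le K}\Lambda_{(i)}$. If $K\ge|\mathcal C|$, zero padding means the answer is simply the sum of all bids. Adding the two costs gives $O(BL+|\mathcal C|)$.

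\textbf{Validity.} This part is immediate from earlier results. Theorem~\ref{thm:cert} and Lemma~\ref{lem:restrict} show that \eqref{eq:cert} holds for every $\lambda\ge0$. The quantity computed above is exactly $C_K(\lambda)$, determined by $\lambda$, $K$ and the corpus alone. So the checker need only verify nonnegativity of $\lambda$, which is an $O(BL)$ scan. It needs no information about how $\lambda$ was produced, such as convergence or optimality.

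\textbf{Main obstacle.} The delicate point is the accounting for the edge count and the indexing, not any inequality. I would state explicitly that the number of edges is at most $BL$ (counting byte edges). I would also state that the candidate identity of each occurrence is provided by the index in constant time, rather than recomputed by hashing substrings, since hashing would cost $O(BL^2)$. Tie handling in the top-$K$ sum is covered by the multiset convention of Lemma~\ref{lem:restrict}.
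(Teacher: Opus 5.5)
Your proposal is correct and follows the route the paper relies on. The paper states this lemma without a separate proof and points to the suffix-array index for $O(BL)$ evaluation; its implicit argument is the one you give: a single topological-order dynamic-programming pass over the at most $BL$ edges of the graphs $G_n$, accumulation of the bids, a linear-time top-$K$ selection, and validity inherited from Theorem~\ref{thm:cert} and Lemma~\ref{lem:restrict} for arbitrary $\lambda\ge0$.
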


\begin{proposition}[Exact numerical certificate]
\label{rem:float}
\label{prop:integer}
Fix an integer scale $Q>0$ and an active set $\mathcal A\subseteq\mathcal C$.
Give each active occurrence a non-negative integer price $a_e$, and form the exact bids
$\widehat{\Lambda}_t=\sum_{e\in E(t)}a_e$ for $t\in \mathcal A$. Let $T$ be their top-$K$ sum, padding with zeros.
Set $h=\widehat{\Lambda}_{(K)}$ when $K<|\mathcal A|$, and $h=0$ otherwise. Give each inactive occurrence the
integer price $a_e=\lfloor h/n_{t(e)}\rfloor$. If $D_{\rm int}$ is the sum of shortest-path costs
with byte cost $Q$ and token-edge cost $Q+a_e$, then
\[
\frac{D_{\rm int}-T}{Q}\ \le\ \mathrm{val}(\mathrm{LP})\ \le\ \OPT(K),
\qquad
\left\lceil\frac{D_{\rm int}-T}{Q}\right\rceil\ \le\ \OPT(K).
\]
\end{proposition}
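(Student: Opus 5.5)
The plan is to reduce the statement to Theorem~\ref{thm:cert} applied to the rescaled real prices $\lambda_e:=a_e/Q\ge0$, and then use integrality of $\OPT(K)$ for the ceiling. Scaling every edge cost by $1/Q$ turns the integer shortest-path problem with costs $Q$ and $Q+a_e$ into the problem with costs $1$ and $1+\lambda_e$. Hence $D_{\rm int}/Q=\sum_n\SP_n(1+\lambda)$ exactly, since shortest-path values are positively homogeneous in the edge costs. Likewise each bid satisfies $\Lambda_t=\widehat\Lambda_t/Q$ for active $t$. Everything therefore hinges on showing that the top-$K$ bid sum over the full multiset on $\mathcal C$, under these prices, is at most $T/Q$.

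For the bid bound, consider an inactive type $t$. Its bid is $\Lambda_t=n_t\lfloor h/n_t\rfloor/Q\le h/Q$, using $n_t>0$ (types without surviving occurrences are omitted). There are two cases.
\begin{itemize}
\item If $K<|\mathcal A|$, then $h/Q$ is the $K$-th largest active bid, so at least $K$ active types carry bids $\ge h/Q\ge\Lambda_t$ for every inactive $t$. A set of $K$ largest bids of the full multiset can therefore be chosen inside $\mathcal A$, and its sum equals $T/Q$. This is the argument of Proposition~\ref{prop:bg}, with equality to $\theta$ replaced by the inequality $\le\theta$. The exchange argument still applies: swapping any inactive member of a top-$K$ set for an unused active type with bid $\ge h/Q$ does not decrease the sum, so the top-$K$ sum over $\mathcal C$ equals that over $\mathcal A$.
\item If $K\ge|\mathcal A|$, then $h=0$, so every inactive price is $\lfloor 0\rfloor=0$ and all inactive bids vanish. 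The top-$K$ sum is then the sum of all active bids padded with zeros, which is $T/Q$.
\end{itemize}

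In both cases $\sum_{i\le K}\Lambda_{(i)}=T/Q$, so $C_K(\lambda)=(D_{\rm int}-T)/Q$. Rather than quote only the inequality of Theorem~\ref{thm:cert}, I would invoke its proof, which exhibits a (D)-feasible point whose objective equals $C_K(\lambda)$. Weak duality then gives $C_K(\lambda)\le\mathrm{val}(\mathrm{LP})$, and Proposition~\ref{prop:ipexact} gives $\mathrm{val}(\mathrm{LP})\le\OPT(K)$. Finally, $\OPT(K)$ is an integer token count, so any real lower bound $r\le\OPT(K)$ implies $\lceil r\rceil\le\OPT(K)$, which gives the ceiling form.

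The main obstacle is the bid-bound step for inactive types. The floor makes inactive bids only at most $h/Q$ rather than equal to it, and the case $K\ge|\mathcal A|$ needs $h=0$ so that inactive types cannot enter the top-$K$ sum with positive bids. I would handle it with the explicit exchange argument above rather than by citing Proposition~\ref{prop:bg} verbatim, since that proposition assumes exact equality to $\theta$.
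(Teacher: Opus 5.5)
Your proposal is correct and follows the paper's proof: you rescale to $\lambda_e=a_e/Q$, bound each inactive bid by $n_t\lfloor h/n_t\rfloor\le h$, and split on whether $K<|\mathcal A|$ to show that $T$ is the top-$K$ sum over all of $\mathcal C$. You then invoke the explicit dual construction behind Theorem~\ref{thm:cert}, Proposition~\ref{prop:ipexact}, and the integrality of $\OPT(K)$, exactly as the paper does, and your exchange argument only spells out a step the paper states in one line.
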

\begin{proof}
The rational prices $\lambda_e=a_e/Q$ are non-negative. Each inactive candidate has total
integer bid $n_t\lfloor h/n_t\rfloor\le h$. When $K<|\mathcal A|$, at least $K$ active bids are
at least $h$; otherwise all inactive bids are zero. Thus $T$ is the top-$K$ sum over the
full candidate set, including every distinct hapax. Scaling all edge costs by $Q$ shows
that $(D_{\rm int}-T)/Q$ is exactly the certificate of Theorem~\ref{thm:cert}. Its LP bound follows
from the explicit dual construction. The final inequality uses that $\OPT(K)$ is an
integer.
\end{proof}

We use $Q=2^{32}$ and convert each saved active float32 price to
$a_e=\lfloor Q\lambda_e^{\mathrm{saved}}\rfloor$ directly from its IEEE-754 bits.
This defines a new dual point; validity does not depend on the accuracy of the
floating-point optimiser. An independent C++ checker uses integer arithmetic for
price conversion, bids, top-$K$ selection and the full shortest-path computation.
It checks accumulator ranges and rejects negative, nonfinite or overflowing inputs.
For a uniform certificate with integer bid ceiling $h$, the same construction uses
$a_e=\lfloor h/n_t\rfloor$ for every candidate and subtracts
$\min(K,|\mathcal C|)h$, an upper bound on its top-$K$ sum.

The checker reconstructs every legal span from the recorded corpus and document
boundaries. It verifies that the suffix index is a permutation ordered by length-$L$
prefixes, independently recomputes prefix matches from the bytes, and checks each
candidate's complete occurrence group and frequency. Distinct hapax strings may
share a storage class but retain separate bids. The artefact records hashes of the
corpus, boundaries, price file, active types, configuration and checker, together with
the exact numerator and denominator. Reported token-count lower bounds are rounded
up to integers; ratio endpoints are then rounded outwards by integer division.

\begin{remark}[A supergradient rate for the full price family]
\label{rem:rate}
Let $C_K(\lambda)=\sum_n\SP_n(1+\lambda)-\sum_{i\le K}\Lambda_{(i)}$.
All coordinates and edge sums in this remark range over multibyte occurrences.
For a shortest-path solution $x^\star$ and a top-$K$ indicator $s$, a supergradient is
$g_e=x_e^\star-s_{t(e)}$. Define the diagonal matrix $\mathsf P$ by $\mathsf P_{ee}=n_{t(e)}$.
Starting from $\lambda_0\ge0$ with step sizes $\eta_k>0$, momentum-free updates
$\lambda_{k+1}=[\lambda_k+\eta_k\mathsf P^{-1}g_k]_+$ are projected supergradient ascent in
coordinates $\mu=\mathsf P^{1/2}\lambda$, where
$\|\mathsf P^{-1/2}g_k\|^2\le\sum_e1/n_{t(e)}=|\mathcal{C}|$.
If $R=\|\mathsf P^{1/2}(\lambda_0-\lambda^\star)\|$ for a maximiser $\lambda^\star$, the standard
projection inequality gives
\[
C_K(\lambda^\star)-\max_{0\le k<T}C_K(\lambda_k)
\ \le\ \frac{R^2+|\mathcal{C}|\sum_{k<T}\eta_k^2}{2\sum_{k<T}\eta_k}.
\]
A constant step proportional to $T^{-1/2}$ gives an $O(T^{-1/2})$ bound; the schedule
$\eta_k\propto(k+1)^{-1/2}$ gives $O(\log T/\sqrt T)$. This result applies to the full,
independently parameterised price family. The production schedule also uses momentum and
background prices coupled to the active bids, so this rate does not apply to it.
Every evaluated price vector remains a valid certificate.
\end{remark}

\subsection{Certified cost intervals and general budgets}
\label{app:tax-budgets}
\label{app:budgets}

\paragraph{Tax intervals.}
\begin{proposition}
\label{prop:tax}
For every pre-tokeniser $\pi$ and budget $K$, $\OPT_\pi(K)\ge\OPT_\varnothing(K)$.
The tax $\tau_\pi(K)$ is non-negative and nondecreasing under refinement of the cut set.
Given non-negative lower bounds and valid upper bounds on both optima, with $\mathrm{LB}_\varnothing>0$,
\[
\tau_\pi(K)\ \in\
\left[\max\!\left\{0,\frac{\mathrm{LB}_\pi}{\mathrm{UB}_\varnothing}-1\right\},\
\frac{\mathrm{UB}_\pi}{\mathrm{LB}_\varnothing}-1\right].
\]
\end{proposition}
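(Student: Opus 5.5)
The plan is to prove the three claims in order: the feasible-set inclusion, then monotonicity under refinement, then the interval by elementary ratio bounds.

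\emph{Inclusion.} Fix $\pi$ and $K$. Take any vocabulary $S$ with $|S|\le K$ that is admissible in the cut regime. Each surviving candidate type is also a candidate in the unrestricted regime, since deleting crossing occurrences never creates new substrings. Hence $S\subseteq\mathcal C_\varnothing$ and $S$ is feasible for \eqref{eq:opt} without cuts. For each document, the usable subgraph of $G_n$ under $\pi$ is a subgraph of the unrestricted usable subgraph: byte edges and document boundaries are preserved, and only crossing edges are removed. A shortest path in a subgraph is no shorter than one in the supergraph, so the per-document segmentation count under $\pi$ is at least the unrestricted one. Summing over documents and minimising over $S$ gives $\OPT_\pi(K)\ge\OPT_\varnothing(K)$.

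\emph{Tax properties.} Non-negativity of $\tau_\pi$ follows directly, because $\OPT_\varnothing(K)\ge 1$ whenever the corpus is nonempty. For refinement, let $\pi'$ have a cut set containing that of $\pi$. Every occurrence surviving under $\pi'$ crosses no cut of $\pi'$, hence none of $\pi$. Applying the same subgraph argument with $\pi$ in place of $\varnothing$ gives $\OPT_{\pi'}(K)\ge\OPT_\pi(K)$. The denominator $\OPT_\varnothing(K)$ is unchanged, so $\tau$ is nondecreasing. The one point needing care is that candidate types omitted in the finer regime do not enlarge its feasible set. This holds because omitting a type with no surviving occurrence only removes useless choices, as the paper already notes.

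\emph{Interval.} With $0\le\mathrm{LB}_j\le\OPT_j(K)\le\mathrm{UB}_j$ and $\mathrm{LB}_\varnothing>0$, every quantity involved is positive, so ratios are monotone in each argument.
\begin{itemize}
\item For the upper endpoint, $\OPT_\pi/\OPT_\varnothing\le\mathrm{UB}_\pi/\mathrm{LB}_\varnothing$.
\item For the lower endpoint, $\OPT_\pi/\OPT_\varnothing\ge\mathrm{LB}_\pi/\mathrm{UB}_\varnothing$; here $\mathrm{UB}_\varnothing\ge\OPT_\varnothing\ge\mathrm{LB}_\varnothing>0$, so the division is defined.
\end{itemize}
Subtracting $1$ and combining with $\tau_\pi\ge0$ gives the $\max\{0,\cdot\}$ form. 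The main obstacle is only the bookkeeping in the inclusion step, namely confirming that the regime-specific candidate sets nest. Everything else is monotonicity of shortest paths and of ratios.
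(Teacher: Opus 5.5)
Your proposal is correct and follows the paper's proof. Deleting crossing edges shrinks the feasible set, and iterating this gives the refinement claim. The nonempty corpus makes both optima positive, and the ratio bounds $\mathrm{LB}_\pi/\mathrm{UB}_\varnothing\le\OPT_\pi/\OPT_\varnothing\le\mathrm{UB}_\pi/\mathrm{LB}_\varnothing$, intersected with $[0,\infty)$, give the interval. One small wording fix: not every quantity is positive, since $\mathrm{LB}_\pi$ may be $0$, but the lower-endpoint step needs only $\mathrm{LB}_\pi\ge0$ and $\mathrm{UB}_\varnothing>0$, so the argument stands.
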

\begin{proof}
Deleting token edges that cross a cut reduces the feasible set without changing the
objective. Further cuts can only increase the constrained optimum. Since the corpus is
nonempty, both optima are positive. The inequalities
$\mathrm{LB}_r\le\OPT_r\le\mathrm{UB}_r$ for $r\in\{\pi,\varnothing\}$ imply
\[
\frac{\mathrm{LB}_\pi}{\mathrm{UB}_\varnothing}
\ \le\ \frac{\OPT_\pi}{\OPT_\varnothing}
\ \le\ \frac{\mathrm{UB}_\pi}{\mathrm{LB}_\varnothing}.
\]
Subtracting one and intersecting with $[0,\infty)$ proves the interval. No assumption on
LP integrality or optimiser convergence is required.
\end{proof}
The result compares cut sets at fixed $K$; it does not imply monotonicity in the vocabulary
budget.

\paragraph{General selection budgets.}
The vocabulary constraints enter the certificate through a linear maximisation oracle.

\begin{proposition}[A linear oracle for the budget]
\label{prop:anybudget}
Let $\mathcal{F}$ be a nonempty family of admissible vocabularies and let
$P\subseteq[0,1]^{\mathcal{C}}$ contain
$\{\mathbf{1}_S:S\in\mathcal{F}\}$. If $\OPT_{\mathcal{F}}$ is the minimum token count over
$\mathcal{F}$, then for every $\lambda\ge0$,
\[
\OPT_{\mathcal{F}}\ \ge\ \sum_n\SP_n(1+\lambda)
-\sup_{y\in P}\sum_t\Lambda_ty_t.
\]
If $P$ is a polytope, the maximum over $\lambda\ge0$ equals the optimum of the relaxation
with unit flows, $x_e\le y_{t(e)}$, and $y\in P$.
\end{proposition}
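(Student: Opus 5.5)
The argument has two parts: a weak-duality bound for every fixed $\lambda\ge0$, and an LP strong-duality identification when $P$ is a polytope.

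\emph{Lower bound.} Fix $\lambda\ge0$ and let $S\in\mathcal F$ attain $\OPT_{\mathcal F}$. Take a minimum-count segmentation over $\Sigma\cup S$ in every document, and let $x$ be the indicator of the used edges. Each path is feasible in the complete graph $G_n$, so
\[
\sum_n\SP_n(1+\lambda)\le\sum_e x_e(1+\lambda_e)=\OPT_{\mathcal F}+\sum_e\lambda_ex_e .
\]
Every used multibyte edge has $t(e)\in S$ and $\lambda\ge0$, so $\sum_e\lambda_ex_e\le\sum_{t\in S}\Lambda_t=\sum_t\Lambda_t(\mathbf 1_S)_t$. Since $\mathbf 1_S\in P$, this is at most $\sup_{y\in P}\sum_t\Lambda_ty_t$, and rearranging gives the bound. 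The same steps as \eqref{eq:wd1}--\eqref{eq:wd2} also give the stronger statement for fractional $(x,y)$ with $y\in P$: the right-hand side is at most the value of the relaxation (R) with unit flows, $x\ge0$, $x_e\le y_{t(e)}$ and $y\in P$. This gives the inequality $\max_\lambda\le\mathrm{val}(R)$.

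\emph{Equality for a polytope.} Write $P=\{y:My\le c\}$, which is possible since a polytope is a bounded polyhedron; the box constraints are implied by $P\subseteq[0,1]^{\mathcal C}$. (R) is then a finite LP. It is feasible, since byte-only flows with any $y\in P$ work ($P$ is nonempty because $\mathcal F$ is), and it is bounded below by $0$. Strong duality therefore holds with attained optima. Assign free potentials $\phi$ to the flow equalities, $\lambda_e\ge0$ to the linking constraints, and $\mu\ge0$ to $My\le c$. The dual then reads
\[
\max\ \sum_n\bigl(\phi^n_{|b^{(n)}|}-\phi^n_0\bigr)-c^\top\mu
\]
subject to the shortest-path potential constraints, as in \eqref{eq:dual}, and $M^\top\mu\ge\Lambda$ componentwise (the constraint generated by the free variables $y_t$).

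At fixed $\lambda$, maximising over $\phi$ gives $\sum_n\SP_n(1+\lambda)$, exactly as in the proof of Theorem~\ref{thm:nogap}. Maximising over $\mu$ means minimising $c^\top\mu$ subject to $M^\top\mu\ge\Lambda$ and $\mu\ge0$. By LP duality this inner problem equals $\max_{y\in P}\Lambda^\top y$, and it is finite and attained because $P$ is nonempty and bounded. The dual optimum is therefore $\max_{\lambda\ge0}\bigl[\sum_n\SP_n(1+\lambda)-\max_{y\in P}\Lambda^\top y\bigr]$, and strong duality for (R) gives equality with $\mathrm{val}(R)$, with the outer maximum attained.

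The main obstacle is bookkeeping in this second step. One must carry an arbitrary description of $P$ through the Lagrangian, confirm that the support function replaces the top-$K$ term (recovering Theorem~\ref{thm:nogap} when $P=\{y\in[0,1]:\sum y\le K\}$), and check attainment, so that the statement's ``maximum'' is justified rather than a supremum. Both the inner and outer attainment rest on the compactness of $P$ and on finite LP strong duality.
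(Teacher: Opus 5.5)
Your proof is correct and follows the paper's route. The bound comes from comparing the priced shortest path with an admissible segmentation (weak duality), and the polytope case from dualising the linking constraints and applying LP strong duality as in Theorem~\ref{thm:nogap}, which you spell out more explicitly through an H-representation of $P$. One small slip: with $y$ free, the dual constraint it generates is $M^\top\mu=\Lambda$, not $M^\top\mu\ge\Lambda$; the inequality is correct once you add the redundant constraint $y\ge0$ (implied by $P\subseteq[0,1]^{\mathcal C}$), and either way the inner problem equals $\max_{y\in P}\Lambda^\top y$, so the conclusion is unaffected.
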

\begin{proof}
For any admissible segmentation $(x,\mathbf{1}_S)$, weak duality gives
\[
\sum_e x_e
\ \ge\ \sum_n\SP_n(1+\lambda)-\sum_t\Lambda_t\mathbf{1}_S(t)
\ \ge\ \sum_n\SP_n(1+\lambda)-\sup_{y\in P}\Lambda^\top y.
\]
Taking the minimum over admissible segmentations proves the bound. For a polytope $P$,
the corresponding primal relaxation is feasible and bounded. Dualising its linking
constraints and applying LP strong duality gives the equality and attainment, as in
Theorem~\ref{thm:nogap}.
\end{proof}

An outer relaxation of admissible selections preserves the bound; excluding an
admissible vocabulary need not. For the inequality, convexity is unnecessary because
a linear objective has the same supremum on $P$ and its convex hull.
The cardinality polytope
$P=\{y\in[0,1]^{\mathcal C}:\sum_t y_t\le K\}$ recovers the top-$K$ oracle.
The boundary-licence model uses two linked selection variables per token, with the
same dual argument applied to its joint selection polytope
(Appendix~\ref{app:boundary-proof}).

%% file: boundary_proofs.tex
\subsection{Occurrence-aware crossing licences}
\label{app:boundary-proof}
For every multibyte string $t$, introduce vocabulary and licence variables
$y_t,z_t\in\{0,1\}$, with $z_t\le y_t$,
$\sum_t y_t\le K$, and $\sum_t z_t\le q$.
Each document supplies a unit flow in its complete substring DAG.
For an occurrence $o$ of $t$, impose $x_o\le y_t$ if it crosses no cut and
$x_o\le z_t$ otherwise. Free byte edges ensure feasibility.
This integer formulation is equivalent to the vocabulary-and-licence
problem by the same path extraction argument as the base formulation.

\paragraph{Endpoints and nesting.}
At $q=0$, every $z_t$ is zero, so all crossing occurrences are unavailable;
the remaining paths are exactly the concatenations of pretoken paths.
At $q=K$, any unrestricted vocabulary can set $z_t=y_t$, so all its
occurrences are legal. Conversely, every licensed vocabulary is an
unrestricted vocabulary after its occurrence restrictions are removed.
Increasing $q$ leaves each previous pair $(V,W)$ feasible, proving
monotonicity. Noninteger requested fractions use $q=\lfloor fK\rfloor$;
$q$ is clipped to $[0,K]$.

\paragraph{Certificate.}
Relax the occurrence-to-vocabulary or occurrence-to-licence inequalities
with multipliers $\lambda_o\ge0$. For a feasible solution, adding
$\lambda_o(x_o-y_t)$ or $\lambda_o(x_o-z_t)$ cannot increase its cost.
Minimisation over paths gives $\sum_n\mathrm{SP}_n(1+\lambda)$.
Minimisation over $(y,z)$ gives $-H_{K,q}(A,B)$, where $A_t$ and $B_t$
sum prices on the two occurrence classes. Their difference therefore bounds
every feasible token count from below.

\paragraph{Exact support algorithm.}
Pad with zero-bid dummy types when the number of candidates is smaller than
$K$. Nonnegative bids allow an optimum with exactly $K$ selected types and
exactly $q$ licences, including unused dummy choices.
Let $m$ be the number of types after padding. Order them so $B_1\ge\cdots\ge B_m$, breaking ties by type identifier.
For a fixed selected vocabulary, exchanging a licence from a smaller-$B$
selected type to a larger-$B$ selected type cannot reduce its score, because
the $A$ contribution is unchanged. Thus some optimal assignment licences
the first $q$ selected types in this order. A split $p$ separates those
types from its $K-q$ unlicensed selected types, giving Equation~\ref{eq:boundary-split}.
Every choice counted on its right-hand side is feasible, and the exchange argument
shows that an optimum is counted. A forward and a backward heap pass
evaluate all splits in $O(m\log(K+1))$ time after sorting.
At $q=0$ the formula reduces to the top-$K$ sum of $A$; at $q=K$ it reduces
to the top-$K$ sum of $A+B$.

\paragraph{LP characterisation.}
The selection polytope is integral. To see this, set $u_t=y_t-z_t$:
its constraints are $u,z\ge0$, $u_t+z_t\le1$,
$\sum_t(u_t+z_t)\le K$, and $\sum_tz_t\le q$.
Represent each type by a capacity-one arc that splits into unlicensed
and licensed channels; the licensed channel has aggregate capacity $q$,
and their combined flow has capacity $K$. An optional-flow circulation
closes this network. Integer capacities give integral vertices, so
$H_{K,q}$ is also the exact support of this continuous selection polytope.
Dualising the occurrence-linking inequalities and applying LP strong
duality therefore identifies the maximum certificate over \emph{all}
non-negative occurrence prices with the joint flow/linking/selection LP
optimum. This equality does not imply integrality of the joint LP.
The finite optimisation and fixed hapax prices used below still yield
lower bounds; they need not attain that LP optimum.

\paragraph{An intermediate-budget integrality gap.}
Take the document \texttt{aaaaa}, cuts after bytes $1$ and $4$, $L=3$ and
$K=2$, so the candidate types are \texttt{aa} and \texttt{aaa}.
At $q=1$, either two-token path uses both types across cuts, requiring two
licences. Thus the integer optimum is $3$, attained by
\texttt{a}+$\,$\texttt{aaa}+$\,$\texttt{a}. The LP sets both vocabulary
variables to $1$, both licence variables to $1/2$, and sends half the flow
along each of \texttt{aa}+$\,$\texttt{aaa} and
\texttt{aaa}+$\,$\texttt{aa}. Its cost is $2$, the minimum path length in
the full graph, so the LP optimum is exactly $2$.
At $q=0$, both optima are $3$; at $q=2$, both are $2$.
Exact rational constraint checks and exhaustive vocabulary/licence
enumeration verify all three cases. This example establishes a possible
licensing-induced gap; it does not attribute the production intervals to
integrality rather than finite price optimisation or primal search.

\paragraph{Fixed hapax background.}
Every string that occurs once has a single occurrence, whose length and
crossing status determine its bid when its price is fixed at $\ell-1$.
Its priced edge then costs $\ell$, equal to its byte fallback.
Equal-length hapaxes with equal crossing status therefore form identical
items for the support computation, although they remain distinct vocabulary
types. No support maximiser selects more than $K$ items from one class, so
retaining $\min(K,m)$ identical representatives from a class of multiplicity
$m$ preserves the support exactly. The full graph still includes every
hapax occurrence. Restricting primal search to repeated strings affects
search quality, not the validity of an achieved vocabulary or the
certificate over the complete candidate class.

\paragraph{Witness sharing.}
For a fixed price vector, $H_{K,q}$ is nondecreasing in $q$, hence its lower
bound is nonincreasing. Taking the maximum over the same finite set of
price witnesses at every budget preserves this order. All reported
certified points can therefore share the full price pool without smoothing
or interpolating experimental values. Similarly, a vocabulary feasible at
a smaller budget remains an admissible warm start at every larger budget.

\paragraph{Certified recovery of endpoint compression gain.}
Let $[L_j,U_j]$ enclose the optimal token count at budget $j$.
When $L_0>U_K$, a positive endpoint gain is certified, and the fraction
$F_q=(\mathrm{OPT}_0-\mathrm{OPT}_q)/(\mathrm{OPT}_0-\mathrm{OPT}_K)$ obeys
\begin{equation}
\max\!\left(0,\frac{L_0-U_q}{L_0-L_K}\right)
\le F_q\le
\min\!\left(1,\frac{U_0-L_q}{U_0-U_K}\right).
\label{eq:boundary-recovery}
\end{equation}
On the nested domain $x_0\ge x_q\ge x_K$, the ratio
$(x_0-x_q)/(x_0-x_K)$ is nondecreasing in $x_0$ and $x_K$ and nonincreasing in
$x_q$. The lower corner has this ordering when $U_q<L_0$; otherwise
clipping at zero is valid. The upper corner has this ordering when
$L_q>U_K$; otherwise clipping at one is valid. Both denominators are
positive under the stated condition. The endpoint identities give
$F_0=0$ and $F_K=1$. Without $L_0>U_K$, these certificates do not identify
a positive optimal endpoint gain, and no recovery fraction is reported.
Displayed interval endpoints are rounded outwards using rational arithmetic.

%% file: setup_full.tex
\subsection{Data and constraints}
\label{app:data}
The main compression grid uses the 20231101 Wikipedia dumps in twelve languages
(en, de, ru, fi, tr, vi, ar, hi, th, ko, ja, zh). Each cell takes a prefix of
approximately $42$\,MB after retaining document lines of at least $16$ bytes.
Script arguments named \texttt{mb} use MiB; table sizes use the actual byte counts.
All comparisons preserve document boundaries and use $K=32768$ non-byte entries
with maximum length $L=16$ bytes. The $256$ single-byte entries are always available
outside this budget. Sources are UTF-8; a byte-limited prefix may retain an incomplete
final character. Regex offsets replay replacement decoding for that tail, while graph
construction and scoring preserve the original bytes. The corpus contains no byte
\texttt{ff}, which the candidate index reserves as an internal separator.

The constrained regime uses an o200k-style regex. Character offsets are converted to
byte offsets before graph edges are removed. The unrestricted regime has no internal
cuts. This definition applies to compression certification and scoring. Language-model
vocabulary construction and stream encoding are specified separately in
Table~\ref{tab:lmdecoder}. The boundary-licence experiment retains the original document
graph and checks each occurrence against the same cut positions.

\subsection{Baseline construction and scoring}
\label{app:baselines}
BPE is trained on the corresponding cut documents with Hugging Face Tokenizers.
Unigram uses SentencePiece with identity normalisation, full character coverage,
byte fallback, and no script or digit splitting. Its character-based length cap is
adapted to the corpus; after requesting an oversized vocabulary, we retain the
highest-scoring distinct byte strings satisfying $L$ until the budget is met or the
candidate pool is exhausted. The artefact records achieved vocabulary sizes.
Long training documents are split at at most $64$\,KiB, preferring whitespace and
preserving UTF-8 character boundaries; all input bytes are accounted for.

\textbf{Two-stage BPE} first learns a regex-constrained vocabulary and then performs
merges over that token stream without internal pre-tokenisation. The transition uses
half the non-byte vocabulary budget. This merge curriculum is inspired by
SuperBPE and BoundlessBPE, with a broader stage-two merge space than whitespace-only
relaxation. The separate $200$k LM experiment
includes the released SuperBPE tokeniser with its serialised inference pipeline.

Compression scores are recomputed from hexadecimal vocabularies using an independent
byte trie. Minimum-count decoding computes a shortest path under the requested cuts;
greedy longest match is a separate decoder. Native BPE ranked merges are used in the
BPE LM streams, as recorded in Table~\ref{tab:lmdecoder}. Rescoring preserves eligible
rare entries even when the optimiser represents unique substrings using shared classes.
For \slot{}, the released-vocabulary count is checked against the candidate-index count.

\subsection{Price and vocabulary optimisation}
\label{app:search}
\input{algorithm}
A suffix array and longest-common-prefix array identify all eligible candidate occurrences.
The active set contains the $M$ candidates with largest $n_t(|t|-1)$, an upper bound on
single-type token savings. The default active-set budget is $M=\max(128K,3\cdot10^6)$, capped at the
number of repeated candidates. Projected ascent uses momentum $0.9$, step size
$4/\sqrt{1+k}$, and a $1/n_t$ preconditioner. A running average is evaluated every
$25$ iterations; the largest evaluated bound is retained. Cell records give exact
iteration counts and configurations. Inactive occurrences remain present with background
prices. The independent integer checker produces the reported conservative lower bound
from the saved price vector.

The English unrestricted witness uses $M=256K=8{,}388{,}608$ and an additional
$2400$ ascent steps from a saved price vector, with step size
$0.30/\sqrt{1+k/800}$, momentum $0.9$, six CPU threads and no iterate averaging.
Its input witness and final integer check are recorded separately.

Vocabulary construction starts with greedy additions, then proposes swaps and
ruin-and-recreate moves. If $f(s)$ and $h(s)$ are forward and backward path costs under
the current vocabulary, and $c$ is the document's count, insertion of a single occurrence
$e=(s,s+\ell)$ has gain
\[
\delta_e=\max\{0,c-f(s)-1-h(s+\ell)\}.
\]
The sum of these gains over a token's occurrences ranks proposals. Overlap makes this
sum a heuristic for adding a whole token type. Removal scores use local replacement
covers. Each complete proposed vocabulary is evaluated with a shortest-path pass;
only strict improvements are retained, preserving an upper bound at every step.
Additional English search merges existing vocabularies before pruning and perturbs
low replacement-cost entries. Its final vocabulary is independently rescored.

\subsection{Validation and resource accounting}
\label{app:validation}
Table~\ref{tab:validate} compares the certificate with small LP and integer programmes.
The integer checker also has exhaustive reference tests for shortest paths, budgets,
background pricing, quantisation and overflow handling. Candidate identities and occurrence
counts are validated before evaluating production cells, and input hashes identify the
corpus and prices. The certificate is specific to those inputs.

The LM measurements draw on an inventory of $\datnLmRuns$ runs, including rate sweeps
and controls rather than independent replications of one contrast. The paired main test
result has $\datTestPairN$ seed pairs across twelve languages. The independent tuning/evaluation replication is reported in Appendix~\ref{app:gpuplan}.

%% file: algorithm.tex
\begin{algorithm}[t]
\caption{\slot{}: vocabulary construction and certification}
\label{alg:main}
\begin{algorithmic}[1]
\Require Corpus $\mathcal D$, length cap $L$, budget $K$, active set size $M$, iteration budget $T$, $K\le M\le|\mathcal{C}|$
\State Build the suffix array and longest-common-prefix index; assign candidate identities and counts $n_t$
\State Select active set $\mathcal A$ by the $M$ largest values of $n_t(|t|-1)$
\State Find $\theta_0\ge0$ by one-dimensional search of the uniform-price bound
\State Initialise active prices $\lambda_e\gets\theta_0/n_{t(e)}$, $v_e\gets0$, and $\mathrm{LB}\gets-\infty$
\For{$k=0,\dots,T-1$}
  \State Accumulate active bids $\Lambda_t$; choose top-$K$ set $S$ and its indicator $s$
  \State $\theta\gets\Lambda_{(K)}$ over $\mathcal A$ if $K<|\mathcal A|$; otherwise $\theta\gets0$
  \State Compute $\SP$ and paths $x^\star$ using active costs $1+\lambda_e$ and inactive costs $1+\theta/n_t$
  \If{$\SP-\sum_{t\in S}\Lambda_t>\mathrm{LB}$}
    \State Store this value as $\mathrm{LB}$ and save its active price vector as $\lambda^{\rm best}$
  \EndIf
  \State On active occurrences, update $v_e\gets0.9v_e+(x_e^\star-s_{t(e)})/n_{t(e)}$
  \State $\lambda_e\gets\max\{0,\lambda_e+4v_e/\sqrt{1+k}\}$; update the running average
  \State Every $25$ iterations, evaluate the average and retain it if its certificate improves $\mathrm{LB}$
\EndFor
\State Grow a vocabulary using summed single-occurrence insertion scores
\State Apply swaps and ruin-and-recreate; retain the best exactly scored feasible vocabulary $V$
\State Quantise $\lambda^{\rm best}$ and independently check the full graph and rational dual value $C_K$ (Proposition~\ref{rem:float})
\State \Return $\lceil C_K\rceil$, $V$, the checked prices, and input hashes
\end{algorithmic}
\end{algorithm}

%% file: compression_supplement.tex
\subsection{Complete language scores}
\label{app:complete}
Table~\ref{tab:compression_all} reports all twelve languages under the common headline
protocol. An upper bound on achievable bytes per token is the corpus byte count divided
by the conservative token-count lower bound. The achieved values are corpus bytes divided
by the independently scored token count. Baselines are compared at their actual
length-eligible vocabulary sizes, which cannot exceed the common budget.
\input{tables/tab_compression_all}

\subsection{Small-instance validation}
\label{app:small}
The comparisons in Table~\ref{tab:validate} distinguish the relaxation gap from price
optimisation and vocabulary-search error. The corresponding ratio factors multiply;
the percentage gaps are not an additive decomposition.
\input{tables/tab_validate}

\subsection{Budgets, regexes and deployed tokenisers}
\label{app:sweeps}
Table~\ref{tab:ksweep} repeats the English and Chinese measurement across vocabulary
budgets; Table~\ref{tab:regexes} replaces the boundary regex by deployed pre-tokenisers;
Table~\ref{tab:deployed} compares native and minimum-count decoding of released vocabularies;
it reports achieved compression without an optimality claim.
Bounds and interval endpoints in Tables~\ref{tab:ksweep} and~\ref{tab:regexes} are
rounded outwards from exact rational values, using the same convention as the prose.
\input{tables/tab_ksweep}
\input{tables/tab_regexes}
\input{tables/tab_deployed}

\subsection{Algorithmic ablations}
\label{app:ablations}
Table~\ref{tab:ablate} separates price optimisation, active-set size and vocabulary search
using the recorded English and Chinese ablations. Preconditioning and momentum improve
the dual objective at the fixed iteration budget. Iterate averaging makes no measurable
difference in these runs. Larger active sets improve the objective with diminishing
returns, while local swaps and ruin-and-recreate improve the feasible vocabulary. These
optimiser diagnostics complement the integer-checked headline certificates; they do not
supply the certified optimal-tax intervals.
\input{tables/tab_ablate}

%% file: tables/tab_compression_all.tex
\begin{table}[t]
\centering
\small
\renewcommand{\arraystretch}{1.12}
\setlength{\tabcolsep}{4pt}
\caption{\textbf{Compression scores and optimal-tax intervals.} All entries except the last column are achieved bytes per token under minimum-count encoding. Bold marks the highest achieved value within each language and regime. The final column is the outward-rounded certified optimal regex tax (\%). Two-stage denotes the local BPE curriculum. Every language uses its own Wikipedia corpus and a common budget of $32768$ non-byte entries; no cross-language byte-rate comparison is implied.}
\label{tab:compression_all}
\begin{tabular}{lrrrrrrr}
\toprule
& \multicolumn{2}{c}{Regex} & \multicolumn{4}{c}{None} & Tax (\%)\\ \cmidrule(lr){2-3}\cmidrule(lr){4-7} Language & \slot{} & BPE & \slot{} & BPE & Two-stage & Unigram & Interval\\
\midrule
English & \textbf{4.616} & 4.551 & \textbf{5.958} & 5.697 & 5.898 & 5.735 & 28.3--36.8\\
German & \textbf{4.823} & 4.714 & \textbf{6.031} & 5.641 & 5.845 & 5.622 & 24.0--33.3\\
Russian & \textbf{7.203} & 7.007 & \textbf{9.012} & 8.422 & 8.459 & 8.288 & 24.1--37.1\\
Finnish & \textbf{4.939} & 4.792 & \textbf{6.190} & 5.794 & 5.891 & 5.729 & 24.0--35.8\\
Turkish & \textbf{4.957} & 4.863 & \textbf{6.330} & 5.776 & 6.077 & 5.928 & 26.8--34.6\\
Vietnamese & \textbf{5.005} & 4.984 & \textbf{7.819} & 6.817 & 7.620 & 7.259 & 56.0--62.2\\
Arabic & \textbf{7.316} & 7.245 & \textbf{8.735} & 8.383 & 8.402 & 8.091 & 18.9--29.9\\
Hindi & \textbf{9.037} & 8.321 & \textbf{11.419} & 9.919 & 10.278 & 9.874 & 25.8--37.8\\
Thai & \textbf{10.022} & 9.153 & \textbf{11.375} & 10.279 & 10.261 & 9.854 & 9.8--23.4\\
Korean & \textbf{5.115} & 5.023 & \textbf{6.379} & 6.140 & 6.158 & 5.819 & 24.3--30.7\\
Japanese & \textbf{5.431} & 5.341 & \textbf{6.506} & 6.324 & 6.233 & 5.988 & 18.9--22.7\\
Chinese & \textbf{4.582} & 4.545 & \textbf{5.045} & 4.978 & 4.967 & 4.787 & 9.8--11.0\\
\bottomrule
\end{tabular}
\end{table}

%% file: tables/tab_validate.tex
\begin{table}[t]
\centering
\footnotesize
\renewcommand{\arraystretch}{1.08}
\setlength{\tabcolsep}{4pt}
\caption{\textbf{Checked dual values recover the full LP reference closely.} $B$ is corpus bytes and $d$ is document length in bytes. Uniform and occurrence-price bounds are evaluated exactly at dyadic prices; recovery percentages use the unrounded rational lower bound and are rounded down. The occurrence-price result is the better of two 4000-step searches. LP and integer optima are the reference HiGHS~\citep{highs2018} solves, with numerical solver tolerances. Below, relaxation, primal and dual gaps are $100(\mathrm{OPT}/\mathrm{LP}-1)$, $100(\mathrm{UB}/\mathrm{OPT}-1)$ and $100(\mathrm{LP}/\mathrm{LB}-1)$; their ratio factors multiply.}
\label{tab:validate}
\begin{tabular}{rrrrrrrr}
\toprule
$B$ & $d$ & $L$ & $K$ & $|\mathcal C|$ & LP & Uniform (\%) & Occurrence (\%)\\
\midrule
1000 & 100 & 6 & 20 & 3k & 660.0 & 91.6 & 99.99\\
1000 & 50 & 8 & 30 & 5k & 570.0 & 86.3 & 99.79\\
1500 & 100 & 10 & 30 & 9k & 880.0 & 86.5 & 99.82\\
2000 & 100 & 6 & 40 & 5k & 1167.5 & 86.1 & 99.97\\
2000 & 100 & 8 & 40 & 9k & 1124.2 & 85.9 & 99.90\\
2000 & 200 & 6 & 30 & 6k & 1256.0 & 88.4 & 99.89\\
3000 & 100 & 8 & 60 & 13k & 1597.7 & 83.4 & 99.92\\
3000 & 150 & 12 & 50 & 24k & 1648.5 & 83.7 & 99.93\\
4000 & 100 & 6 & 60 & 10k & 2202.0 & 84.2 & 99.89\\
4000 & 150 & 8 & 100 & 16k & 1774.1 & 79.6 & 99.61\\
8000 & 150 & 8 & 100 & 29k & 3847.0 & 81.2 & 99.77\\
8000 & 200 & 6 & 200 & 17k & 3093.2 & 80.1 & 99.69\\
16000 & 200 & 6 & 200 & 27k & 6532.0 & 81.4 & 99.74\\
32000 & 200 & 8 & 400 & 85k & 10253.7 & 78.0 & 99.59\\
\midrule
\multicolumn{8}{l}{\textbf{Decomposition on completed integer solves}}\\
$B$ & $d$ & $L$ & $K$ & OPT & Relaxation (\%) & Primal (\%) & Dual (\%)\\
\midrule
1000 & 100 & 6 & 20 & 660 & 0.00 & 0.45 & 0.01\\
1000 & 50 & 8 & 30 & 570 & 0.00 & 1.05 & 0.21\\
1500 & 100 & 10 & 30 & 881 & 0.11 & 1.02 & 0.18\\
2000 & 100 & 6 & 40 & 1175 & 0.64 & 2.13 & 0.02\\
2000 & 100 & 8 & 40 & 1125 & 0.07 & 0.53 & 0.09\\
2000 & 200 & 6 & 30 & 1261 & 0.40 & 0.08 & 0.10\\
3000 & 100 & 8 & 60 & 1599 & 0.08 & 0.00 & 0.07\\
3000 & 150 & 12 & 50 & 1649 & 0.03 & 1.33 & 0.06\\
4000 & 100 & 6 & 60 & 2212 & 0.45 & 1.85 & 0.10\\
\bottomrule
\end{tabular}
\end{table}

%% file: tables/tab_ksweep.tex
\begin{table}[t]
\centering
\small
\renewcommand{\arraystretch}{1.08}
\caption{Vocabulary-budget sweep at $L{=}16$, $42$\,MB per language. $B/K$ is corpus bytes per vocabulary slot; bound columns are certified ceilings in bytes per token. Achieved tax and its certified two-sided interval use Proposition~\ref{prop:tax}. BPE gaps are listed for regex and regex-free regimes. `ws' is a whitespace-only constraint at the indicated budget.}
\label{tab:ksweep}
\begin{tabular}{llrrrrcr}
\toprule
lang & $K$ & $B/K$ & bound $\pi$ & bound $\varnothing$ & achieved $\tau$ & certified $\tau$ & BPE gap $\pi\,/\,\varnothing$\\
\midrule
\multicolumn{8}{l}{\textbf{English}}\\
 & 4096 & 10240 & 3.699 & 4.542 & 10.7\% & 4.6--30.0\% & 9.8\%\,/\,23.7\%\\
 & 8192 & 5120 & 4.056 & 5.134 & 15.1\% & 11.4--30.7\% & 6.6\%\,/\,19.7\%\\
 & 32768 & 1280 & 4.642 & 6.314 & 29.1\% & 28.3--36.8\% & 2.1\%\,/\,10.9\%\\
 & 131072 & 320 & 4.994 & 8.006 & 54.5\% & 53.6--61.3\% & 1.3\%\,/\,11.9\%\\
 & \emph{ws} 32768 & 1280 & 5.058 & 6.314 & 19.0\% & 17.8--26.1\% & 3.1\%\,/\,--\\
\midrule
\multicolumn{8}{l}{\textbf{Chinese}}\\
 & 4096 & 10239 & 3.312 & 3.513 & 4.9\% & 2.9--8.2\% & 2.8\%\,/\,4.2\%\\
 & 8192 & 5120 & 3.733 & 4.010 & 6.5\% & 5.5--8.5\% & 1.8\%\,/\,3.1\%\\
 & 32768 & 1280 & 4.593 & 5.084 & 10.1\% & 9.8--11.0\% & 1.1\%\,/\,2.2\%\\
 & 131072 & 320 & 5.499 & 6.348 & 14.7\% & 14.4--15.7\% & 1.4\%\,/\,3.1\%\\
 & \emph{ws} 32768 & 1280 & 4.966 & 5.084 & 2.1\% & 1.6--2.9\% & 1.7\%\,/\,--\\
\bottomrule
\end{tabular}
\end{table}

%% file: tables/tab_regexes.tex
\begin{table}[t]
\centering
\small
\renewcommand{\arraystretch}{1.08}
\caption{Pre-tokenisation tax under deployed regex definitions on the same English headline instance. Each constrained vocabulary and bound is computed independently. cl100k and LLaMA-3 induce identical cuts on this corpus, so their rows are one measurement and their certified intervals coincide. `$\bar\ell_\pi$' is the mean induced pretoken length in bytes.}
\label{tab:regexes}
\begin{tabular}{llrcl}
\toprule
regex (deployed in) & $\bar\ell_\pi$ & achieved & certified $\tau_\pi$ & note\\
\midrule
o200k / GPT-4o style & 5.07 & 29.1\% & 28.3--36.8\% & \\
r50k (GPT-2) & 5.18 & 26.0\% & 25.3--33.5\% & \\
cl100k (GPT-3.5/4) & 5.05 & 29.2\% & 28.5--37.0\% & \\
LLaMA-3 & 5.05 & 29.2\% & 28.5--37.0\% & $\equiv$ cl100k\\
\bottomrule
\end{tabular}
\end{table}

%% file: tables/tab_deployed.tex
\begin{table}[t]
\centering
\small
\renewcommand{\arraystretch}{1.12}
\setlength{\tabcolsep}{4pt}
\caption{Released encoders and minimum-count segmentation on approximately $126$\,MB per language. Values are achieved bytes per token (higher is better). $K$ counts retained multibyte vocabulary entries of length at most $L=16$. Native uses the released encoder; the last two columns use minimum-count segmentation with common o200k-style cuts or no cuts. Removing tokens longer than $L$ increases the unrestricted minimum token count by at most $\datdepLongBenefitMax\%$ in these measurements. No optimization bound is used in this comparison.}
\label{tab:deployed}
\begin{tabular}{llrrrr}
\toprule
Language & Encoder & $K$ & Native & Cuts & No cuts\\
\midrule
English & \texttt{o200k\_base} & 195{,}757 & 4.474 & 4.501 & \textbf{4.508}\\
English & \texttt{cl100k\_base} & 99{,}200 & 4.408 & 4.437 & \textbf{4.444}\\
Chinese & \texttt{o200k\_base} & 195{,}757 & 3.145 & 3.149 & \textbf{3.177}\\
Chinese & \texttt{cl100k\_base} & 99{,}200 & 2.144 & 2.147 & \textbf{2.164}\\
\bottomrule
\end{tabular}
\end{table}

%% file: tables/tab_ablate.tex
\begin{table}[t]\centering\small
\setlength{\tabcolsep}{4pt}
\caption{Algorithm diagnostics on unrestricted English and Chinese instances, $K=32768$, $L=16$, approximately $16.8$\,MB per language, and 400 dual iterations. Values use the recorded floating-point dual objective; they are optimization diagnostics, not independent integer certificates. The primal gap is $100(\mathrm{UB}/\mathrm{LB}-1)$ relative to the full-optimizer objective. Each setting has one recorded run on 40 CPU threads; times are seconds for the indicated optimizer or individual primal stage. Uniform-price initialization and active-set timings are not reported. The largest active sets contain 16,657,541 English and 16,777,216 Chinese types.}\label{tab:ablate}
\begin{tabular}{lrrrr}\toprule Variant & English & Time & Chinese & Time\\\midrule
\multicolumn{5}{l}{\emph{Dual objective relative to the full optimizer (\%; higher is better)}}\\
Uniform prices & 85.76 & -- & 88.91 & --\\
Without preconditioning & 85.76 & 153 & 88.91 & 134\\
Without momentum & 95.67 & 287 & 98.14 & 271\\
Without averaging & \textbf{100.00} & 314 & \textbf{100.00} & 265\\
Full optimizer & \textbf{100.00} & 305 & \textbf{100.00} & 294\\
\midrule
\multicolumn{5}{l}{\emph{Tracked types: dual objective relative to the full optimizer (\%)}}\\
$|\mathcal A|=K$ & 41.05 & -- & 32.47 & --\\
$|\mathcal A|=8K$ & 86.87 & -- & 92.48 & --\\
$|\mathcal A|=32K$ & 97.04 & -- & 99.57 & --\\
$|\mathcal A|=128K$ & 100.00 & -- & 100.00 & --\\
Largest tested $|\mathcal A|$ & \textbf{100.09} & -- & \textbf{100.02} & --\\
\midrule
\multicolumn{5}{l}{\emph{Primal--dual gap after each stage (\%; lower is better)}}\\
Greedy construction & 13.74 & 82 & 4.97 & 97\\
Local refinement & 9.05 & 145 & 2.12 & 65\\
Ruin-and-recreate & \textbf{8.19} & 382 & \textbf{1.86} & 335\\
\bottomrule\end{tabular}\end{table}

%% file: lm_full.tex
\section{Language-model protocol and supplementary evidence}
\label{app:lmfull}

\subsection{Vocabulary construction and stream decoding}
\label{app:panelmech}

Vocabulary construction, admissible occurrences during encoding, and the segmentation objective are
separate parts of a tokeniser. Table~\ref{tab:lmdecoder} records the pipelines of the LM
experiments. Training, validation, test, and downstream likelihood evaluation use the same decoder
within an arm. All byte vocabularies include singleton-byte fallbacks; the end-of-sequence (EOS)
token has zero raw-byte length.

\begin{table}[t]
\centering\small
\caption{Decoder protocol of the LM experiments. ``Regex'' in the construction column
means that vocabulary fitting restricts candidate occurrences. A regex-trained byte vocabulary does
not itself impose boundary masks at encoding time. The released SuperBPE artefact retains its
serialised tokeniser pipeline.}
\label{tab:lmdecoder}
\begin{tabular}{p{.19\textwidth}p{.24\textwidth}p{.20\textwidth}p{.24\textwidth}}
\toprule
Arm & Vocabulary construction & LM encoding boundaries & Stream/evaluation decoder\\
\midrule
BPE with regex & Native BPE, regex pieces & Serialised regex & Native ranked merges\\
BPE without regex & Native BPE, complete documents & None & Native ranked merges\\
\slot{}$_\pi$ & \slot{}, regex-constrained occurrences & None & Minimum-count Viterbi\\
\slot{} & \slot{}, unrestricted occurrences & None & Minimum-count Viterbi\\
Two-stage BPE & Locally trained two-stage merge adaptation & None & Minimum-count Viterbi\\
Unigram & Likelihood-trained byte vocabulary & None & Minimum-count Viterbi\\
Released SuperBPE & Official external $200$k vocabulary & Serialised pipeline & Native ranked merges\\
\bottomrule
\end{tabular}
\end{table}

The \slot{}/\slot{}$_\pi$ pair holds the inference rule fixed while changing the vocabulary-fitting
constraint. Recorded vocabulary-search effort also differs: English and Chinese use $350$ versus
$700$ dual steps and active-set sizes $3{,}145{,}728$ versus a cap of $4{,}194{,}304$
(unrestricted versus regex); the other languages use $700$ steps in both regimes.
The BPE pair changes both its fitting boundaries and its serialised pre-tokenisation.
Local Two-stage BPE uses a broader second-stage merge space.
Unigram uses Viterbi instead of its native likelihood segmentation.
The reproduction records include an executable decoder audit and artefact hashes.

\subsection{Test-only multilingual inference}
\label{app:testprotocol}

English LM data comes from FineWeb~\citep{fineweb} (\textsc{CC-MAIN-2024-10}), Chinese from
\texttt{chinese-fineweb-edu-v2}~\citep{chinesefinewebedu}, and the other ten languages from their
FineWeb-2 subsets~\citep{fineweb2}.
These are distinct from the Wikipedia compression benchmark. Each language's arms share raw
documents and split definitions; cross-language differences can include corpus differences.
The reference vocabularies are fitted on $32$\,MB corpus prefixes, which overlap the
validation prefixes used for rate selection. Test regions lie outside the consumed training and
fitting text. Test preparation rejects documents whose
content digest occurs earlier in the corpus; source offsets, byte totals, and split metadata are
recorded for every arm. Tokenised streams concatenate documents with EOS separators. The
training loader samples token offsets with replacement, rather than aligned raw-byte starts.

The reference Transformer has 12 layers, width 768, 12 attention heads, rotary position embeddings,
RMSNorm, and SwiGLU feed-forward layers. Input and output embeddings are untied. AdamW uses
$(\beta_1,\beta_2)=(0.9,0.95)$, $\epsilon=10^{-8}$, weight decay $0.1$, and gradient clipping at
norm $1$. Training uses \texttt{bf16}, 100 warm-up steps, and a cosine schedule ending at one tenth
of the initial rate. The panel's 800 steps each contain $786{,}432$ tokens. Its base rates are
$\{0.0006,0.0012,0.0024\}$ for the regex-trained arms and
$\{0.0003,0.0006,0.0012\}$ for the unrestricted arms. All arms share a $2048$-token context and
the same padded vocabulary size. Matching tokens therefore matches model computation within a
cell, while raw-byte exposure and the byte span of the context can differ.

The primary panel has twelve languages and three paired training seeds per language
($\datpanelSeeds$). A run is
eligible only if its architecture, token budget, vocabulary size, selected rate, encoded test stream,
and scored-token count match the specified panel geometry. The analysis reads
\texttt{test\_result.json} observations and averages repeated evaluations within each arm and
seed. It then forms a relative contrast for each seed, averages seeds within language, and weights
languages equally. Hindi and Thai lack matching selected BPE test pairs; that BPE contrast
therefore covers ten languages.

Each arm's learning rate minimises the mean of repeated validation evaluations separately for each seed.
The base grid has three candidates; boundary extensions enter only when complete at all seeds and
balanced in count between the two arms. Of the $\datpanelSelN$ recorded per-seed rate selections,
$\datpanelSelInteriorN$ sit at interior grid points; the remaining $\datpanelSelEdgeN$ edge selections
are \datpanelSelEdgeList. The edge cells are not symmetric across arms: English contributes four
(the unrestricted arm at the bottom of its rate grid and the regex arm at the top, each at seeds
$5678$ and $9012$). These edge optima leave the effect of further tuning on the English contrast and the
equal-language panel mean unresolved. Test loss never enters rate selection. The result is held-out test
evaluation after within-seed validation selection, not evaluation under independent tuning
seeds. Every finite grid score, selected test observation, and byte-denominator audit has source-path
and SHA-256 provenance in \texttt{runs/panel.json}.

Per-language uncertainty uses the standard deviation of relative paired-seed contrasts and a
two-sided Student-$t$ interval with two degrees of freedom. Holm adjustment controls multiplicity
within each contrast family, conditional on the individual tests' assumptions; it does not pool
variances across languages. The sign test instead asks whether positive language-level means occur
more often than probability one half, assuming independent signs. Its panel-level conclusion differs
from significance for an individual language. The equal-language mean interval resamples languages
$20{,}000$ times with a fixed seed, conditional on the observed per-language seed means.
This language-resampling summary does not propagate training-seed or rate-selection uncertainty
and is not a prediction interval for a new language. Training-seed variability includes numerical reproducibility effects; these are
not added again as a separate variance component.

The primary analysis normalises each loss by the exact byte sum of its scored target-token
prefix, with EOS assigned zero bytes. Full-stream totals are checked against preparation metadata.
Complete token blocks cover tokeniser-dependent prefixes of the shared test slice, so exact
normalisation does not imply identical scored text.
Retained test summaries use the same target-byte convention; the validation curves in
Figures~\ref{fig:scale}--\ref{fig:conv} retain the original split-average bytes-per-token normalisation.

\input{tables/tab_test_panel}

\subsection{Learning-rate and training-scale controls}
\label{app:lrvar}
\label{app:convextok}
\label{app:xxlfull}

\begin{table}[t]
\centering
\small
\renewcommand{\arraystretch}{1.12}
\setlength{\tabcolsep}{4pt}
\caption{\textbf{The prediction penalty under scale and budget controls.}
Contrasts of \slot{} against the indicated reference arm, from the paired $\datTestN$-language
$85$M panel to the $1.88$B English rung. Test bits-per-byte entries are relative increases; BLiMP entries are accuracy-point
differences. The summary column reports the standard deviation across the
$\datTestN$ per-language mean contrasts (row one); the $1.88$B summary concerns whether the deficit continues to attenuate,
using a descriptive band twice the root-sum-square of the arm checkpoint standard deviations. Replicates are training seeds except for the $1.88$B row,
which counts checkpoints. The
$996$M selected-rate test entry uses one seed per arm, and the $387$M matched-bytes contrast's
budget response is unresolved at \datbbSeedN{} seeds per cell
(Appendix~\ref{app:lmfull}). The $387$M matched-token difference against \slot{}$_\pi$ is $\datbbDefTok$ points.
For the $1.88$B comparison against BPE, the $996$M reference difference is $\datyyAnchorBpe$ points.}
\label{tab:lmscale}
\begin{tabular}{llrrr}
\toprule
Setting & Contrast & Effect & Replicates & Summary\\
\midrule
$85$M, \datTestN{} languages & vs.\ \slot{}$_\pi$, test bits/byte & $+\datTestMean\%$ & \datTestSeedN & $\datTestSd$\\
$996$M, $K{=}200$k & vs.\ BPE, test bits/byte & $+\datxxlSlotVsBpe\%$ & 1 & --\\
$387$M, $K{=}200$k, matched bytes & vs.\ \slot{}$_\pi$, BLiMP & $\datbbDefBytes$ pts & \datbbSeedN & --\\
$1.88$B, $K{=}200$k, $D{=}20N$ & vs.\ BPE, BLiMP & $\datyyDefBpe$ pts & \datyyNBpe & unresolved\\
\bottomrule
\end{tabular}
\end{table}

Shared-rate comparisons can confound tokenisation with optimisation stability. In the English
reference sweep, the selected rates are $1.2$--$2.4\times10^{-3}$ for the two regex-trained
vocabularies and $6\times10^{-4}$ for the four unrestricted vocabularies. At the shared rate
$1.2\times10^{-3}$, the largest measured seed standard deviation is
$\datlmEnSdRatio\times$ the BPE seed spread at that same rate. We consequently match tuning budgets
and select rates separately; an interior grid minimum establishes local coverage, not a global
optimum over rates.

At $996$M and $K=200$k, the observed grid places every arm's minimum in the interior.
In the selected-rate test comparison, \slot{}$_\pi$ reaches $\datxxlSlotpiBpb$ bits per byte against
BPE's $\datxxlBpeBpb$, using seed $1234$ for each arm. The additional three-seed dispersion is measured at the
transported centre rate (defined below) and does not describe every selected-rate entry. \slot{}$_\pi$ selects
twice that centre rate. At twice the centre rate its $\datxxlSlotpiXtwoN$ seeds give
$\datxxlSlotpiXtwoMean\pm\datxxlSlotpiXtwoSd$ bits per byte, while BPE gives
$\datxxlBpeXtwoMean\pm\datxxlBpeXtwoSd$ over $\datxxlBpeXtwoN$ seeds. BPE's centre-rate loss is
$\datxxlBpeBpb$. Thus the within-regime ranking depends on rate tolerance as well as vocabulary
construction.

The shared FineWeb corpus contains $\datxxlCorpusGB$\,GB. Training windows are sampled with
replacement, and each arm's token budget is below its encoded corpus capacity.

\paragraph{Fixed tokens per parameter.}\label{app:co}
The complementary $D=20N$ ladder uses measured non-embedding parameters and jointly increases
model size and training tokens. The English \slot{} penalty falls from $\datcoFirst\%$ at
$\datcoFirstN$M to $\datcoLast\%$ at $\datcoLastN$M. Each rung transports the reference rate by
$d^{-1/2}$ and evaluates half and twice that rate. An expanded $\datcoCorpusGB$\,GB corpus keeps
every arm's consumption below its encoded capacity. This allocation fixes training tokens
per non-embedding parameter under replacement sampling.

\begin{figure}[t]
\centering
\includegraphics[width=.92\textwidth]{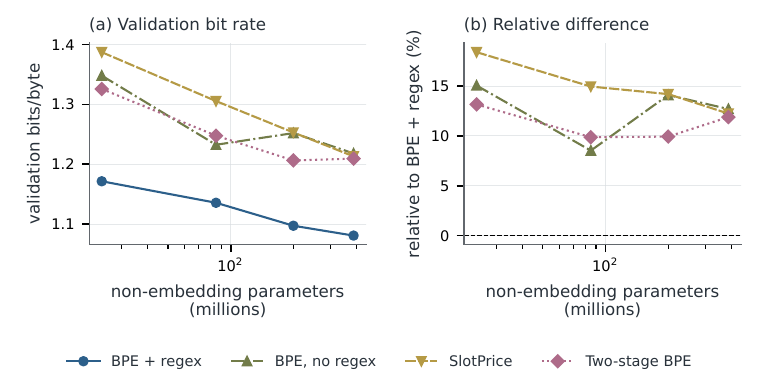}
\caption{English size ladder at $629{,}145{,}600$ training tokens ($800$ steps),
$K=32768$ non-byte entries and seed $1234$: (a) validation bits per byte and (b) loss
relative to BPE with regex. Each point uses the best validation rate for that arm and model size.}
\label{fig:scale}
\end{figure}

\begin{figure}[t]
\centering
\includegraphics[width=\textwidth]{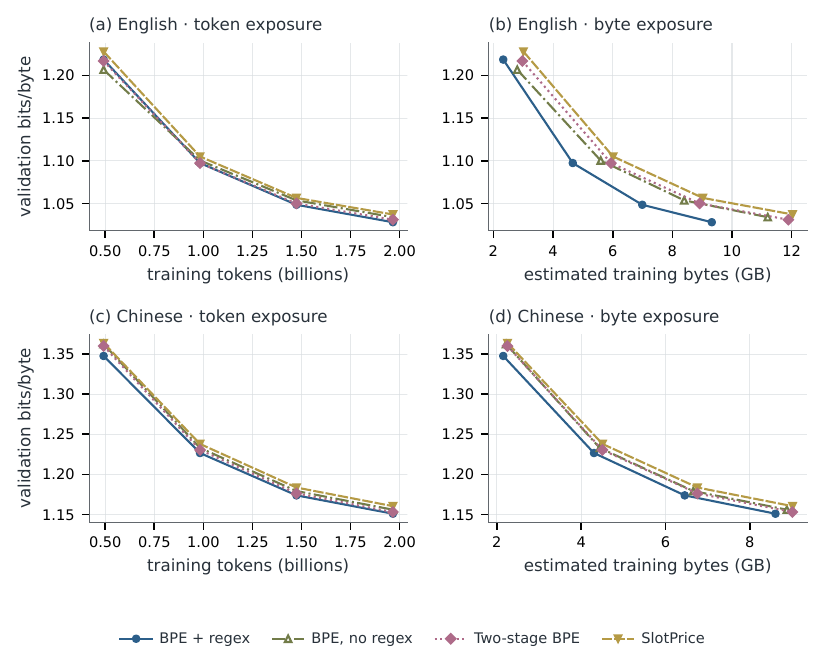}
\caption{English (top) and Chinese (bottom) validation bits per byte against training tokens
(left) and estimated byte exposure (right). All curves use an $85$M non-embedding-parameter
Transformer (12 layers, width 768, 12 heads), $K=32768$ non-byte entries ($33{,}152$ padded),
a $2048$-token context, seed $1234$ and shared initial rate $1.2\times10^{-3}$.
Exposure equals training tokens times mean training bytes per token, including repeated
sampling; losses use mean validation bytes per token. Table~\ref{tab:lmdecoder} specifies the decoders.}
\label{fig:conv}
\end{figure}

\FloatBarrier
\paragraph{Compression and prediction accounting.}
With consistent normalisation, $\mathrm{bits/byte}=\mathrm{bits/token}/\mathrm{bytes/token}$.
Thus changes in log bit rate equal changes in log bits per token minus changes in log
bytes per token. This exact accounting relation describes the measured losses; it does
not identify why a vocabulary changes prediction difficulty.

\subsection{BLiMP and byte exposure}
\label{app:downstream}

We evaluate three seeds per arm at $85$M/$32$k and $996$M/$200$k using the Benchmark of
Linguistic Minimal Pairs (BLiMP) for English~\citep{warstadt2020blimp}.
Each comparison uses raw full-sentence likelihood on $67{,}000$ minimal pairs.

The \slot{} minus \slot{}$_\pi$ BLiMP difference is $\datdsABlimpSlotVsSlotpi$ points at $85$M and
$\datdsBBlimpSlotVsSlotpi$ at $996$M. Three of four linguistic-field deficits narrow; the morphology
gap narrows from $\datdsUidAMorph$ to $\datdsUidBMorph$ points. Model size, vocabulary size,
training budget, and byte exposure all change between these settings, so this pattern does not
isolate a morphological mechanism. The $85$M-minus-$996$M paired contrast is
$\datdsAttenDiff\pm\datdsAttenDiffSd$ points (mean $\pm$ paired-seed standard deviation).
Its item-bootstrap interval
$[\datdsDoDBootLo,\datdsDoDBootHi]$ conditions on the evaluated checkpoints and does not replace
training-seed uncertainty.

\begin{figure}[t]
\centering
\includegraphics[width=\textwidth]{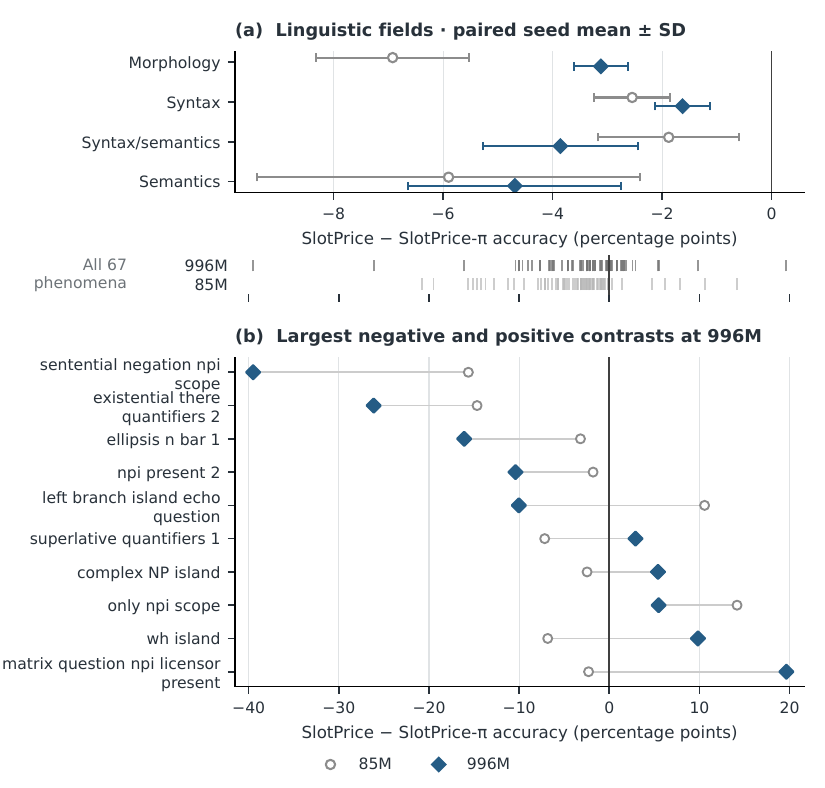}
\caption{\textbf{Paired \slot{} minus \slot{}$_\pi$ BLiMP accuracy differences.} (a) Field means and
standard deviations across paired seed contrasts. (b) The five most negative and five most
positive $996$M phenomenon-level contrasts (blue diamonds), linked to their $85$M
counterparts (open grey circles); the rug above shows all 67 phenomena at both scales,
placing the selected extrema in the full distribution. These selected extrema are exploratory.}
\label{fig:blimpuid}
\end{figure}

At $387$M/$200$k, the matched-token control uses $9835$ steps per arm; the matched-byte comparison
uses \slot{} at $6277$ steps and \slot{}$_\pi$ at $9835$. The BLiMP difference widens from
$\datbbDefTok$ to $\datbbDefBytes$ points. The longer-minus-shorter within-arm changes are
$\datbbBudgetSlot$ points for \slot{} and $\datbbBudgetSlotpi$ for \slot{}$_\pi$. With $\datbbSeedN$ seeds
per cell, their difference in budget response remains unresolved.

At $1.88$B/$200$k/$D=20N$, the \slot{} differences are $\datyyDefBpe$ points against BPE and
$\datyyDefSlotpi$ against \slot{}$_\pi$, compared with $996$M references
$\datyyAnchorBpe$ and $\datyyAnchorSlotpi$, respectively. Each arm contributes $\datyyNBpe$ checkpoints.
Twice the root-sum-square of the per-arm checkpoint standard deviations gives bands of
$\datyyBandBpe$ and $\datyyBandSlotpi$ points, respectively. These are descriptive bands, not confidence
intervals, and do not resolve whether the deficit continues to attenuate. BPE's BLiMP evaluation includes one half-rate fallback
checkpoint; its bits-per-byte dispersion uses centre-rate checkpoints. The two metrics therefore
use different checkpoint sets.

%% file: tables/tab_test_panel.tex
\begin{table}[t]
\centering
\small
\renewcommand{\arraystretch}{1.12}
\setlength{\tabcolsep}{4pt}
\caption{\textbf{Held-out test results for the paired fitted-dictionary comparison.} \slot{} dictionaries are trained with ($\pi$) or without ($\varnothing$) the regex; both use document-level Viterbi encoding. Columns show mean test bits per byte, paired relative change (\%), and its $95\%$ paired-$t$ interval over three seeds. The final column tests the twelve language contrasts using two-sided paired tests and Holm correction at $0.05$.}
\label{tab:testpanel}
\begin{tabular}{lrrrrc}
\toprule
Language & $\pi$ & $\varnothing$ & Change (\%) & $95\%$ interval & Holm\\
\midrule
English & 1.1588 & 1.3405 & +15.68 & [+13.19, +18.18] & \textbf{Yes}\\
German & 1.1773 & 1.3292 & +12.92 & [+1.78, +24.06] & No\\
Russian & 0.6245 & 0.6964 & +11.53 & [+9.12, +13.93] & \textbf{Yes}\\
Finnish & 1.2412 & 1.3619 & +9.73 & [+6.72, +12.73] & No\\
Turkish & 1.0826 & 1.1915 & +10.06 & [+3.66, +16.46] & No\\
Vietnamese & 0.8965 & 0.9428 & +5.21 & [-3.97, +14.38] & No\\
Arabic & 0.7657 & 0.8982 & +17.31 & [+8.70, +25.92] & No\\
Hindi & 0.4823 & 0.5196 & +7.77 & [+1.35, +14.18] & No\\
Thai & 0.4884 & 0.4928 & +0.93 & [-4.70, +6.57] & No\\
Korean & 1.0853 & 1.2024 & +10.78 & [+3.37, +18.19] & No\\
Japanese & 1.0393 & 1.1101 & +6.82 & [-1.40, +15.03] & No\\
Chinese & 1.1348 & 1.1607 & +2.28 & [-0.69, +5.25] & No\\
\bottomrule
\end{tabular}
\end{table}

%% file: boundary_supplement.tex
\section{Boundary-licence protocol and controls}
\label{app:boundary-protocol}
The frontier uses the project's English and Chinese Wikipedia text files.
It reads complete UTF-8 records, without partial-record truncation; following
records form a disjoint held-out compression slice. Corpus sizes, source-line
indices and SHA-256 hashes are exported in each provenance file. These records are
smaller than the $42$\,MB headline cells of Appendix~\ref{app:setup}, and the
cut-placement control uses $K=8{,}192$ rather than $32{,}768$; the reduced scale keeps
the exported candidate graphs, licence assignments and exact enumeration checks
tractable. Each table states the corpus size and budget of its own experiment. The
candidate class contains every distinct substring of length $2$ through
$L=16$ within a fitting record, including strings whose bytes are not
independently valid UTF-8. Byte fallback preserves every input byte.

The primal search selects repeated strings through insertion and reassignment
heuristics, retaining only exactly rescored improvements. The certificate
continues to cover all candidate strings. Hapax prices are fixed at
length minus one; repeated occurrences have individual non-negative prices.
The optimiser uses float64 prices. Rounding them to non-negative integer
multiples of $2^{-32}$ with round-to-nearest-even defines a new dyadic
witness, whose shortest paths and support are recomputed using checked
integer arithmetic. The ceiling of the resulting rational
lower bound is valid because the target token count is an integer.

\begin{samepage}
The reference encoder and optimisation scorer share occurrence legality and
minimum-count decoding, choosing the longest final token on ties.
IDs $0$--$255$ represent bytes; sorted multibyte strings and a separate
end-of-document ID follow. Document boundaries are never crossed. Each
vocabulary JSON records the regex, licence flags, budgets and encoding
contract. The GPU manifest binds these files by path and hash to preserve
the occurrence restriction during stream preparation.\par
\end{samepage}

CPU diagnostics include held-out bytes per token, the fraction of tokens
that cross a cut, the fraction of cuts crossed, empirical unigram entropy,
and the token mass assigned to types observed at most five times. These
describe the induced segmentation; empirical unigram entropy is not a
neural language-model loss. The random-boundary controls permute the regex
pieces' lengths in Unicode characters within each document, using a
document-hash seed. They preserve piece count and the character-length
multiset; the byte-length multiset can change.

The compression-matched controls were selected using fitting records only.
Candidate arms must share $K$ and the exact fitting-document hash, have
distinct fitting segmentations, differ by at most $1\%$ in symmetric
relative bytes per token, and differ by at least five percentage points
in their crossing-token rate measured against the same linguistic regex
cuts. The deterministic choice maximises that crossing separation and
then minimises the compression difference. Neither held-out compression
nor neural evaluation outcomes select the pairs.

Tiny instances independently enumerate vocabularies and licence assignments,
compare byte-string and graph decoders, and check integer-priced shortest
paths against exhaustive path enumeration. They include a repeated string
with both legal within-piece and crossing occurrences, exercising the case
that invalidates a fixed type partition. The exported complete graphs and
dyadic witnesses permit a separate checker to reconstruct candidates,
boundaries, the support term and the bound without invoking the optimiser.
\input{boundary_table}
\input{boundary_controls}

%% file: boundary_table.tex
\begin{table}[t]\centering\small
\caption{Boundary-licence frontier diagnostics at $K=\boundaryVocabBudget$, $L=16$. The cap $q$ and used licence count $|W|$ can differ. Fitting bytes per token are bracketed by the achieved vocabulary and certified ceiling, rounded downwards and upwards respectively. The remaining columns use disjoint held-out records totalling 1,107,737 English and 1,060,126 Chinese bytes. ``Cross'' is the percentage of emitted tokens crossing at least one regex cut; ``rare'' is token mass on types observed at most five times in that held-out slice; $H_0$ is empirical unigram entropy in bits per token. These diagnostics are not neural losses.}
\label{tab:boundary-frontier}\begin{tabular}{rrrrrrrr}\toprule
$q$ & $|W|$ & achieved & ceiling & held-out & cross (\%) & rare (\%) & $H_0$\\\midrule
\multicolumn{8}{l}{\textbf{English}}\\
0 & 0 & 4.746 & 5.059 & 4.539 & 0.0 & 11.2 & 10.56\\
655 & 655 & 5.566 & 6.274 & 5.343 & 20.0 & 13.0 & 12.04\\
1{,}638 & 1{,}638 & 5.788 & 6.509 & 5.537 & 25.1 & 13.7 & 12.43\\
3{,}276 & 3{,}276 & 5.961 & 6.711 & 5.686 & 28.5 & 15.0 & 12.74\\
8{,}192 & 8{,}192 & 6.178 & 6.958 & 5.834 & 33.0 & 17.4 & 13.14\\
32{,}768 & 32{,}768 & 6.237 & 7.221 & 5.877 & 33.5 & 18.2 & 13.20\\
\midrule
\multicolumn{8}{l}{\textbf{Chinese}}\\
0 & 0 & 4.842 & 5.056 & 4.479 & 0.0 & 14.1 & 12.20\\
655 & 655 & 5.157 & 5.435 & 4.788 & 6.4 & 14.9 & 12.72\\
1{,}638 & 1{,}638 & 5.214 & 5.488 & 4.826 & 7.2 & 15.2 & 12.79\\
3{,}276 & 3{,}276 & 5.221 & 5.509 & 4.828 & 7.5 & 15.2 & 12.80\\
8{,}192 & 3{,}276 & 5.221 & 5.510 & 4.828 & 7.5 & 15.2 & 12.80\\
32{,}768 & 32{,}768 & 5.221 & 5.510 & 4.829 & 7.8 & 15.3 & 12.80\\
\bottomrule\end{tabular}\end{table}
\begin{table}[t]\centering\small
\caption{Recovery of the endpoint token-count reduction at interior budgets. The achieved ratio compares exported vocabularies. The certified interval encloses the fraction of the optimal endpoint gain recovered at that budget, using Equation~\ref{eq:boundary-recovery}; endpoints are rounded outwards. Its upper endpoint is $100\%$, so the final column prints the certified lower endpoint. At the omitted endpoints $q=0$ and $q=K$, recovery is exactly $0\%$ and $100\%$ by definition.}
\label{tab:boundary-recovery}\begin{tabular}{lrrr}\toprule
Language & $q$ & achieved (\%) & optimal gain (\%)\\\midrule
English & 655 & 61.6 & $\geq 30.4$\\
English & 1{,}638 & 75.3 & $\geq 42.1$\\
English & 3{,}276 & 85.2 & $\geq 50.5$\\
English & 8{,}192 & 96.9 & $\geq 60.5$\\
Chinese & 655 & 84.2 & $\geq 24.1$\\
Chinese & 1{,}638 & 98.3 & $\geq 37.1$\\
Chinese & 3{,}276 & 100.0 & $\geq 38.7$\\
Chinese & 8{,}192 & 100.0 & $\geq 38.7$\\
\bottomrule\end{tabular}\end{table}

%% file: boundary_controls.tex
\begin{table}[t]\centering\small
\caption{Cut-placement control at $K=8{,}192$, $L=16$, with 2,098,205 English and 2,136,886 Chinese fitting bytes. Random cuts permute each document's regex-piece lengths in Unicode characters, preserving cut count and the character-length multiset; byte-length multisets can differ. The achieved and optimal columns give $100(\mathrm{tokens}_{\rm random}/\mathrm{tokens}_{\rm regex}-1)$ for fitting vocabularies and optima respectively. Certified interval endpoints are rounded outwards. At $q=K$ the optimal token-count change is exactly zero because both feasible sets are unrestricted, although finite-search vocabularies can differ. Crossing rates use the same linguistic regex cuts for both arms; they describe fitting segmentations, not neural prediction.}
\label{tab:boundary-controls}\begin{tabular}{rrrrrrr}\toprule
 & \multicolumn{2}{c}{Random/regex change (\%)} & \multicolumn{2}{c}{Held-out bytes/token} & \multicolumn{2}{c}{FIT crossing (\%)}\\
$q$ & achieved & optimal interval & regex & random & regex & random\\\midrule
\multicolumn{7}{l}{\textbf{English}}\\
0 & +48.7 & [47.1,53.5] & 3.803 & 2.743 & 0.0 & 26.6\\
163 & +47.1 & [15.2,57.0] & 4.169 & 3.018 & 13.2 & 26.4\\
409 & +42.9 & [14.9,52.3] & 4.282 & 3.179 & 16.9 & 27.6\\
819 & +37.6 & [12.0,46.8] & 4.360 & 3.344 & 20.0 & 28.5\\
2{,}048 & +23.7 & [1.2,35.5] & 4.365 & 3.660 & 20.3 & 29.7\\
8{,}192 & +2.4 & [0.0,0.0] & 4.365 & 4.250 & 20.3 & 28.4\\
\midrule
\multicolumn{7}{l}{\textbf{Chinese}}\\
0 & +5.4 & [3.9,6.5] & 3.445 & 3.373 & 0.0 & 5.8\\
163 & +7.6 & [5.2,9.1] & 3.616 & 3.437 & 4.2 & 6.1\\
409 & +6.7 & [4.4,8.2] & 3.646 & 3.466 & 4.9 & 6.2\\
819 & +5.3 & [3.0,6.9] & 3.648 & 3.496 & 5.1 & 6.3\\
2{,}048 & +2.8 & [0.5,4.4] & 3.648 & 3.567 & 5.1 & 6.6\\
8{,}192 & +0.3 & [0.0,0.0] & 3.650 & 3.662 & 5.2 & 7.0\\
\bottomrule\end{tabular}\end{table}
\paragraph{Compression-matched exports.} These controls use $K=8{,}192$ multibyte slots. For English, regex $q=409$ and random $q=8{,}192$ differ by 0.12\% in symmetric fitting compression and 11.46 percentage points in common-regex crossing rate. For Chinese, regex $q=0$ and random $q=819$ differ by 0.14\% in symmetric fitting compression and 6.33 percentage points in common-regex crossing rate. These pairs were selected solely from fitting compression and crossing diagnostics under the stated deterministic rule.

%% file: related_full.tex
\paragraph{Vocabulary optimisation.}
BPE~\citep{gage1994,sennrich2016} and Unigram~\citep{kudo2018,sentencepiece} optimise
surrogate vocabulary objectives. Among dictionary-based methods, \citet{galle2019} found that shorter test-set
tokenisations at a fixed vocabulary budget correlate with higher translation quality,
and Unigram vocabularies match or outperform BPE for masked-language-model
pretraining~\citep{bostrom2020}. ConvexTok~\citep{convextok} instead relaxes joint
vocabulary selection and minimum-count segmentation to a linear programme;
JOLT~\citep{jolt} adds consistency with greedy inference. Our formulation shares the
vocabulary-selection polytope with ConvexTok. Its contribution is a specialised price
certificate that can be evaluated without a general-purpose LP solve and without
internal pre-tokenisation. The full price family attains the same LP optimum, while
background pricing makes restricted optimisation possible without discarding candidates.

\paragraph{Lagrangian pricing.}
Relaxing coupling constraints to obtain separable subproblems is
classical~\citep{held1970,geoffrion1974,fisher1981}. Related cardinality and facility-location
problems also use prices and selection
subproblems~\citep{cornuejols1977,erlenkotter1978}. Here the separable problem is corpus segmentation,
and the vocabulary support function becomes a top-$K$ sum. Boundary licences replace
that support function with a nested selection problem while preserving shortest-path
evaluation.

\paragraph{Relaxing boundaries.}
SuperBPE~\citep{superbpe} and BoundlessBPE~\citep{boundlessbpe} learn tokens spanning
conventional pretoken boundaries. Efficient superword training~\citep{fastersuper} and
entropy-based boundary selection~\citep{entropypretok} explore related design choices.
These methods construct vocabularies; our optimal-tax interval measures what a boundary
constraint excludes, independently of a particular construction heuristic.

\paragraph{Token units and prediction.}
PathPiece~\citep{pathpiece} shows that lower token counts need not improve downstream
performance. R\'enyi efficiency~\citep{renyi} and its
counterexamples~\citep{counterexamples} likewise motivate evaluating prediction beyond intrinsic
compression scores. TokEval~\citep{tokeval2026} studies intrinsic and structural metrics
with controlled LM pretraining; \citet{erdogan2026} examine how tokenisation changes
entropy and sensitivity to domain shift. Our contribution is to pair downstream
measurements with certified compression optima and a budgeted intervention on boundary
crossing. Token-free architectures such as MEGABYTE, MambaByte and the Byte Latent
Transformer~\citep{megabyte,mambabyte,blt} provide an architectural alternative; the
present experiments hold the model family fixed.

%% file: checklist.tex
The supplementary archive contains four compact data tables, their reported summaries and
\texttt{verify.py}. It runs with Python 3.10 or later using only the standard library:
\begin{center}
\texttt{python verify.py}
\end{center}
The script verifies file checksums and reconstructs the relative paired-seed effects,
means, standard deviations, paired-$t$ intervals and Holm adjustments. For independent
replication it also reconstructs bits per byte from accumulated negative log-likelihoods
and exact scored-byte counts. The compression tables provide integer lower bounds and
achieved counts; rational arithmetic reconstructs the outward-rounded optimal-tax
intervals and achieved licence recovery. These checks reproduce the arithmetic from
recorded measurements. Full-corpus certification additionally evaluates the candidate
graphs and saved occurrence prices, using the procedures in
Appendices~\ref{app:price-verification} and~\ref{app:boundary-proof}.

The archive's README specifies every field, experimental panel and evaluation mode.
All data and scripts needed for these checks are included. The separate manuscript-source
archive includes the figures, tables, bibliography and official style files; it compiles
with pdfLaTeX and BibTeX. The extracted source archive is independently compiled and
compared with the delivered PDF page by page.

%% file: gpu_pending.tex
\subsection{Independent tuning and evaluation}
\label{app:gpuplan}
This replication compares regex-constrained and unrestricted dictionary construction with
separate seeds for learning-rate selection and final evaluation. Both arms use exported
byte vocabularies and document-level minimum-count Viterbi encoding. Within each language,
the arms share raw documents, model architecture, token budget, validation-selection rule
and scored test-byte interval. Learning rates minimise mean validation loss over tuning seeds $11001$ and $11002$;
final comparisons pair evaluation seeds $21001$, $21002$ and $21003$.
Table~\ref{tab:replication-rates} reports the candidate grids and selected rates. The model has
12 layers, width 768, 12 heads and a padded vocabulary of $33{,}152$ entries; each run
uses 800 steps of $786{,}432$ target tokens. Training samples common raw-byte start
positions, with tokeniser-specific endpoints under the common token budget.

Evaluation uses three context policies on the same target-byte region. Independent blocks
contain at most $2048$ tokens. Byte-capped history retains the longest whole-token suffix
within $1024$ raw bytes before each target; discarded prefixes are absent from the model
input. Sliding evaluation retains up to $2048$ tokens, advances by $128$ tokens, and scores
only the newly exposed targets. Each mode normalises by the exact bytes in scored targets;
end-of-document markers provide context but contribute no scored bytes.

\input{tables/tab_replication_rates}
\input{gpu_results}

Intervals describe variation across paired evaluation seeds after selecting rates using
the tuning seeds. Holm correction applies across the complete twelve-language family.
These measurements test the fitted-dictionary comparison under independent rate selection;
they do not evaluate intermediate boundary-licence budgets.

%% file: tables/tab_replication_rates.tex
\begin{table}[t]
\centering
\small
\setlength{\tabcolsep}{7pt}
\caption{Learning-rate grids and selected rates for the independent replication. All rates are in units of $10^{-3}$. The grids are $A=\{0.3,0.6,1.2,2.4\}$, $B=A\cup\{4.8,9.6\}$, and $H=\{0.009375,0.01875,0.0375,0.075,0.15\}$. Every grid point uses two tuning seeds. The selected rate minimizes their mean validation bits per byte and lies strictly inside its grid. Three disjoint seeds provide test measurements.}
\label{tab:replication-rates}
\begin{tabular}{lrrrr}
\toprule
& \multicolumn{2}{c}{Unrestricted} & \multicolumn{2}{c}{Regex-fitted}\\
\cmidrule(lr){2-3}\cmidrule(lr){4-5}
Language & Grid & Rate & Grid & Rate\\
\midrule
English & $A$ & 0.6 & $B$ & 2.4\\
German & $A$ & 0.6 & $B$ & 2.4\\
Russian & $A$ & 1.2 & $B$ & 2.4\\
Finnish & $B$ & 2.4 & $B$ & 2.4\\
Turkish & $A$ & 0.6 & $B$ & 2.4\\
Vietnamese & $A$ & 0.6 & $B$ & 2.4\\
Arabic & $A$ & 1.2 & $B$ & 2.4\\
Hindi & $A\cup H$ & 0.0375 & $B\cup H$ & 0.0375\\
Thai & $A\cup H$ & 0.0375 & $B\cup H$ & 0.0375\\
Korean & $A$ & 0.6 & $B$ & 1.2\\
Japanese & $A$ & 0.6 & $B$ & 1.2\\
Chinese & $A$ & 1.2 & $B$ & 1.2\\
\bottomrule
\end{tabular}
\end{table}

%% file: gpu_results.tex
\begin{longtable}{@{}p{.25\textwidth}p{.19\textwidth}p{.33\textwidth}p{.11\textwidth}@{}}
\caption{Independent tuning and evaluation: paired test contrasts. Changes are $100(\mathrm{bpb}_{\varnothing}/\mathrm{bpb}_{\pi}-1)$; positive values favor the regex-fitted dictionary. Entries give the mean and two-sided $95\%$ paired-$t$ interval over three evaluation seeds, disjoint from the two tuning seeds. Holm adjustment covers twelve languages separately within each context mode. Token blocks use 2048-token contexts; byte history uses a physical whole-token suffix of at most 1024 bytes per target; sliding contexts use strided overlapping windows.}\label{tab:gpu_verified}\\
\toprule Comparison & Context & Relative change (\%) / status & Holm $p$\\\midrule\endfirsthead
\toprule Comparison & Context & Relative change (\%) / status & Holm $p$\\\midrule\endhead
English & Token blocks & 10.44 [6.23, 14.64] & 0.060\\
Chinese & Token blocks & 4.62 [1.96, 7.29] & 0.060\\
German & Token blocks & 11.06 [8.99, 13.14] & 0.021\\
Finnish & Token blocks & 6.41 [3.85, 8.97] & 0.060\\
Russian & Token blocks & 7.45 [3.07, 11.84] & 0.060\\
Japanese & Token blocks & 3.36 [1.74, 4.97] & 0.060\\
Thai & Token blocks & -0.59 [-1.47, 0.30] & 0.105\\
Turkish & Token blocks & 8.37 [6.65, 10.09] & 0.023\\
Vietnamese & Token blocks & 3.42 [1.99, 4.85] & 0.060\\
Arabic & Token blocks & 13.69 [10.45, 16.94] & 0.027\\
Hindi & Token blocks & 10.22 [9.28, 11.16] & 0.005\\
Korean & Token blocks & 5.35 [3.51, 7.19] & 0.051\\
English & Byte history & 9.90 [6.95, 12.85] & 0.040\\
Chinese & Byte history & 4.95 [1.80, 8.10] & 0.070\\
German & Byte history & 9.61 [7.76, 11.46] & 0.022\\
Finnish & Byte history & 6.01 [4.27, 7.75] & 0.040\\
Russian & Byte history & 5.75 [1.73, 9.76] & 0.070\\
Japanese & Byte history & 2.62 [1.11, 4.13] & 0.070\\
Thai & Byte history & -0.11 [-0.95, 0.74] & 0.645\\
Turkish & Byte history & 6.88 [5.41, 8.35] & 0.025\\
Vietnamese & Byte history & 3.78 [2.57, 4.98] & 0.040\\
Arabic & Byte history & 11.13 [7.51, 14.76] & 0.040\\
Hindi & Byte history & 14.73 [12.75, 16.71] & 0.012\\
Korean & Byte history & 5.12 [3.22, 7.02] & 0.040\\
English & Sliding & 10.68 [6.19, 15.17] & 0.053\\
Chinese & Sliding & 5.36 [2.97, 7.76] & 0.053\\
German & Sliding & 11.36 [9.24, 13.48] & 0.021\\
Finnish & Sliding & 6.57 [3.82, 9.32] & 0.053\\
Russian & Sliding & 7.94 [3.40, 12.49] & 0.053\\
Japanese & Sliding & 3.16 [2.22, 4.10] & 0.038\\
Thai & Sliding & -0.39 [-1.26, 0.47] & 0.189\\
Turkish & Sliding & 8.84 [7.18, 10.51] & 0.021\\
Vietnamese & Sliding & 3.44 [2.05, 4.84] & 0.053\\
Arabic & Sliding & 14.04 [10.67, 17.41] & 0.028\\
Hindi & Sliding & 10.58 [10.04, 11.13] & 0.002\\
Korean & Sliding & 5.43 [3.68, 7.18] & 0.039\\
\bottomrule
\end{longtable}